\newcommand{\Ac}{{\mathcal{A}}}
\newcommand{\Dc}{{\mathcal{D}}}
\newcommand{\Fc}{{\mathcal{F}}}
\newcommand{\Lc}{{\mathcal{L}}}
\newcommand{\Rc}{{\mathcal{R}}}
\newcommand{\Xc}{{\mathcal{X}}}
\newcommand{\Yc}{{\mathcal{Y}}}
\newcommand{\Eb}{{\mathbb{E}}}
\newcommand{\Ibb}{{\mathbb{I}}}
\newcommand{\Nbb}{{\mathbb{N}}}
\newcommand{\Rbb}{{\mathbb{R}}}
\newcommand{\hp}{{\hat{p}}}
\newtheorem{prop}{Proposition}
\newtheorem{definition}{Definition}
\newtheorem{theorem}{Theorem}
\newtheorem{lemma}{Lemma}
\newcommand{\E}{{\mathbb{E}}}
\newcommand\numberthis{\addtocounter{equation}{1}\tag{\theequation}}
\def\shownotes{1}  \ifnum\shownotes=1
\newcommand{\authnote}[2]{{[#1: #2]}}
\newcommand{\authnote}[2]{}
\newcommand{\tr}{{\mathrm{trace}}}
\newcommand{\aY}{{\hat{Y}}}
\newcommand{\Lall}{{\mathcal{L}_{\mathrm{all}}}}
\newcommand{\critic}{{B^K}}
\newcommand{\criticd}{{\bar{B}^K}}
\newcommand{\vc}{{\mathrm{VC}}}
\DeclareMathSymbol\drule  \mathord{bbold}{"01}
\newcommand{\druleall}{{\drule_{\mathrm{all}}}}
\newenvironment{manualtheorem}[1]{%
  \manualtheoreminner
}{\endmanualtheoreminner}
\newcommand\blfootnote[1]{%
  \begingroup
  \renewcommand\thefootnote{}\footnote{#1}%
  \addtocounter{footnote}{-1}%
  \endgroup
}
\newcommand{\hEb}{{\hat{\Eb}}}
\newcommand*{\centerfloat}{%
  \parindent \z@
  \leftskip \z@ \@plus 1fil \@minus \textwidth
  \rightskip\leftskip
  \parfillskip \z@skip}
\title{Calibrating Predictions to Decisions: A Novel Approach to Multi-Class Calibration}
\author{%
  Shengjia Zhao \\
  Stanford University  \\
  \texttt{sjzhao@stanford.edu}
  \And
  Michael P. Kim \\
  UC Berkeley \\
  \texttt{mpkim@berkeley.edu}
  \And
  Roshni Sahoo \\
  Stanford University \\
  \texttt{rsahoo@stanford.edu}
  \And
  Tengyu Ma \\
  Stanford University \\
  \texttt{tengyuma@stanford.edu}
  \And
  Stefano Ermon \\
  Stanford University \\
  \texttt{ermon@stanford.edu}
}
\begin{document}

\maketitle




\begin{abstract}

When facing uncertainty, decision-makers want predictions they can trust. A machine learning provider can convey confidence to decision-makers by guaranteeing their predictions are distribution calibrated--- amongst the inputs that receive a predicted class probabilities vector $q$, 
the actual distribution over classes is 
$q$. For multi-class prediction problems, however, 
achieving distribution calibration tends to be infeasible,
requiring sample complexity exponential in the number of classes $C$.
In this work, we introduce a new notion---\emph{decision calibration}---that requires the predicted distribution and true distribution 
to be ``indistinguishable'' to a set of downstream decision-makers. 
When all possible decision makers are under consideration, decision calibration is the same as distribution calibration. However, when we only consider decision makers choosing between a bounded number of actions (e.g. polynomial in $C$), 
our main result shows that decisions calibration becomes feasible---we design a recalibration algorithms that requires sample complexity polynomial in the number of actions and the number of classes.
We validate our recalibration algorithm empirically: compared to existing methods, decision calibration improves decision-making on skin lesion and ImageNet classification with modern neural network predictors.

\end{abstract}

\section{Introduction}
\label{introduction}

Machine learning predictions are increasingly employed
by downstream decision makers who have
little or no visibility on how the models were designed and trained.
In high-stakes settings, such as healthcare applications, 
decision makers want predictions they can trust.
For example in healthcare, suppose a machine learning service offers a supervised learning model to healthcare providers that claims to predict the probability of various skin diseases, given an image of a lesion.
Each healthcare provider want assurance that the model's predictions lead to beneficial decisions, according to their own loss functions. 
As a result, 
the healthcare providers may reasonably worry that the model was trained using a loss function different than their own.
This mismatch is often inevitable because the ML service may provide the same prediction model to many healthcare providers, which may have different treatment options available and loss functions.
Even the same healthcare provider could have different loss functions throughout time, due to changes in treatment availability.

If predicted probabilities perfectly equal the true probability of the event,
this issue of trust would not arise because they would lead to optimal decision making regardless of the loss function or task considered by downstream decision makers. In practice, however, predicted probabilities are never perfect. To address this, the healthcare providers may insist that the prediction function be \emph{distribution calibrated}, requiring that amongst the inputs that receive predicted class probability vectors $q$, the actual distribution over classes is $q$. This solves the trust issue because among the patients who receive prediction $q$, the healthcare providers knows that the true label distribution is $q$, and hence knows the true expected loss of a treatment on these patients.  
Unfortunately, to achieve distribution calibration, we need to reason about the set of individuals $x$ who receive prediction $q$, for \emph{every} possible predicted $q$.
As the number of distinct predictions may naturally grow exponentially in the number of classes $C$, the amount of data needed to accurately certify distribution calibration tends to be prohibitive.
Due to this statistical barrier, most work on calibrated multi-class predictions focuses on obtaining relaxed variants of calibration.
These include \emph{confidence calibration} \citep{guo2017calibration}, which calibrates predictions only over the most likely class, and \emph{classwise calibration} \citep{kull2019beyond}, which calibrates predictions for each class marginally.
While feasible, these notions are significantly weaker than distribution calibration and do not address the trust issue highlighted above. 
Is there a calibration notion that addresses the issue of trust, but can also be verified and achieved efficiently? Our paper answers this question affirmatively. 

\paragraph{Our Contributions.} 

We introduce a new notion of calibration---\emph{decision calibration}---where
we take the perspective of potential decision-makers:  the only differences in predictions that matter are those that could lead to different decisions.
Inspired by \citet{dwork2021outcome},
 we formalize this intuition by requiring that predictions are ``indistinguishable'' from the true outcomes, according to a collection of decision-makers.

First, we show that prior notions of calibration can be characterized as special cases 
of decision calibration under different collections of 
decision-makers.
This framing explains the strengths and weakness of existing notions of calibration, and clarifies the guarantees they offer to decision makers.
For example, we show that a predictor is distribution calibrated if and only if it is decision calibrated with respect to \emph{all} loss functions and decision rules. 
This characterization demonstrates why distribution calibration is so challenging: achieving distribution calibration requires simultaneously reasoning about all possible decision tasks.

The set of \emph{all} 
decision rules include those that choose between exponentially (in number of classes $C$) many actions. 
In practice, however, decision-makers typically choose from a bounded (or slowly-growing as a function of $C$) set of actions.
Our main contribution is an algorithm that \emph{guarantees 
decision calibration for such more 
realistic decision-makers}.
In particular, we give a sample-efficient algorithm that takes a pre-trained predictor and post-processes it to achieve decision calibration with respect to \emph{all} decision-makers choosing from a bounded set of actions. 
Our recalibration procedure does not harm other common performance metrics, and actually improves accuracy and likelihood of the predictions. 
In fact, we argue formally that, in the setting of bounded actions, optimizing for decision calibration recovers many of the benefits of distribution calibration, while drastically improving the sample complexity. 
 Empirically, we use our algorithm to recalibrate deep network predictors on two large scale datasets: skin lesion classification (HAM10000) and Imagenet. Our recalibration algorithm improves decision making, and allow for more accurate decision loss estimation compared to existing recalibration methods. 

\section{Background}

\subsection{Setup and Notation}

We consider the prediction problem with random variables $X$ and $Y$, where
$X \in \Xc$ denotes the input features, and $Y \in \Yc$ denotes the label. We focus on classification where $\Yc = \lbrace (1, 0, \cdots, 0), (0, 1, \cdots, 0), \cdots, (0, 0, \cdots, 1) \rbrace$ where each $y \in \Yc$ is a one-hot vector with $C \in \Nbb$ classes.~\footnote{We can also equivalently define $\Yc = \lbrace 1, 2, \cdots, C \rbrace$, here we denote $y$ by a one-hot vector for notation convenience when taking expectations.}
A probability prediction function is a map $\hat{p}: \Xc \to \Delta^C$ where $\Delta^C$ is the $C$-dimensional simplex. We define the support of $\hp$ as the set of distributions it could predict, i.e. $\lbrace \hp(x) \vert x \in \Xc \rbrace$. 
We use $p^*: \Xc \to \Delta^C$ to denote the true conditional probability vector, i.e. $p^*(x) = \Eb[Y \mid X= x] \in \Delta^C$,
and for all $c \in [C]$, each coordinate gives the probability of the class $p^*(x)_c = \Pr[Y_c \mid X = x]$.


\subsection{Decision Making Tasks and Loss Functions}

We formalize a decision making task as a loss minimization problem. The decision maker has some set of available actions $\Ac$ and a loss function $\ell: \Yc \times \Ac \to \Rbb$. In this paper we assume the loss function does not directly depend on the input features $X$. 
For notational simplicity we often refer to a (action set, loss function) pair $(\Ac, \ell)$ only by the loss function $\ell$: the set of actions $\Ac$ is implicitly defined by the domain of $\ell$. We denote the set of all possible loss functions as $\Lall = \lbrace\ell: \Yc \times \Ac \to \mathbb{R} \rbrace$.

We treat all action sets $\Ac$ with the same cardinality as the same set --- they are equivalent up to renaming the actions. A convenient way to think about this is that we only consider actions sets $\Ac \in \lbrace [1], [2], \cdots, [K], \cdots, \Nbb, \Rbb \rbrace$ where $[K] = \lbrace 1, \cdots, K \rbrace$.  



\subsection{Bayes Decision Making}

Given some predicted probability $\hp(X)$ on $Y$, a decision maker selects an action in $\Ac$. We assume that the decision maker selects the action based on the predicted probability. That is, we define a decision function as any map from the predicted probability to an action $\delta: \Delta^C \to \Ac$.  and denote by $\druleall$ as the set of all decision functions.

Typically a decision maker selects the action that minimizes the expected loss (under the predicted probability). This strategy is formalized by the following definition of Bayes decision making.
\begin{definition}[Bayes Decision]
\label{def:bayes_decision}
Choose any $\ell \in \Lall$ with corresponding action space $\Ac$ and prediction $\hp$, define the Bayes decision function as
$
    \delta_\ell(\hp(x)) = \arg\inf_{a \in \Ac} \Eb_{\aY \sim \hp(x)} [\ell(\aY, a)]
$. 
For any subset $\Lc \subset \Lall$ denote the set of all Bayes decision functions as $\drule_\Lc := \lbrace \delta_\ell, \ell \in \Lc \rbrace$. 
\end{definition}

\section{Calibration: A Decision Making Perspective}

In our setup, the decision maker outsources the prediction task and uses a prediction function $\hp$ provided by a third-party  forecaster (e.g., an ML Prediction API).
Thus, we imagine a forecaster who trains a generic prediction function
without knowledge of the exact loss $\ell$ downstream decision makers will use (or worse, will be used by multiple decision-makers with different loss functions).
For example, the prediction function may be trained to minimize a standard objective such as L2 error or log likelihood, then sold to decision makers as an off-the-shelf solution.

In such a setting, the decision makers may be concerned that the off-the-shelf solution may not perform well according to their loss function.
If the forecaster could predict optimally
(i.e. $\hp(X) = p^*(X)$ almost surely),
then there would be no issue of trust;
of course, perfect predictions are usually impossible, so the forecaster needs feasible ways of conveying trust to the decision makers.
To mitigate concerns about the performance of the prediction function, the forecaster might aim to offer
performance guarantees applicable to decision makers
whose loss functions come from class of losses $\Lc \subset \Lall$.

\subsection{Decision Calibration}

First and foremost, a decision maker wants assurance that making decisions based on the prediction function $\hp$ give low expected loss.
In particular, a decision maker with loss $\ell$ wants assurance that the Bayes decision rule $\delta_\ell$ is the best decision making strategy, given $\hat{p}$.
In other words, she wants $\delta_\ell$ to incur a lower loss compared to alternative decision rules.
Second, the decision maker wants to know how much loss is going to be incurred (before the actions are deployed and outcomes are revealed); the decision maker does not want to incur 
any additional loss in surprise that she has not prepared for. 

To capture these desiderata we formalize a definition based on the following intuition: suppose a decision maker with some loss function $\ell$ considers a decision rule $\delta \in \druleall$ (that may or may not be the Bayes decision rule), the decision maker should be able to correctly compute the expected loss of using $\delta$ to make decisions, as a function of \emph{the predictions} $\hat{p}$.


\begin{definition}[Decision Calibration]
\label{def:decision_calibration}
For any set of loss functions $\Lc \subset \Lall$ and set of decision rules $\drule \subset \druleall := \lbrace \Delta^C \to \Ac \rbrace$, we say that a prediction $\hp$ is $(\Lc;\drule)$-decision calibrated (with respect to $p^*$) if $\forall \ell \in \Lc$ and $\delta \in \drule$ with the same action space $\Ac$~\footnote{We require $\ell$ and $\delta$ to have the same action space $\Ac$ for type check reasons. Eq.(\ref{eq:equal_loss}) only has meaning if the loss $\ell$ and decision rule $\drule$ are associated with the same action space $\Ac$. }
\begin{align}
    \Eb_X \Eb_{\aY \sim \hp(X)} [\ell(\aY, \delta(\hp(X))) ] = \Eb_X \Eb_{Y \sim p^*(X)} [\ell(Y, \delta(\hp(X))) ] \label{eq:equal_loss}
\end{align}
In particular, we say $\hp$ is $\Lc$-decision calibrated if it is $(\Lc;\drule_\Lc)$-decision calibrated. 
\end{definition}
The left hand side of Eq.(\ref{eq:equal_loss}) is the ``simulated'' loss where the outcome $\aY$ is hypothetically drawn from the predicted distribution. The decision maker can compute this just by knowing the input features $X$ and without knowing the outcome $Y$. The right hand side of Eq.(\ref{eq:equal_loss}) is the true loss that the decision maker incurs in reality if she uses the decision rule $\delta$.
Intuitively, the definition captures the idea that the losses in $\Lc$ and decision rules in $\drule$ do not distinguish between outcomes sampled according to the predicted probability and the true probability; specifically, the definition can be viewed as a concrete instantiation of the framework of \emph{Outcome Indistinguishability}~\citep{dwork2021outcome}.

As a cautionary remark, Eq.(\ref{eq:equal_loss}) should not be mis-interpreted as guarantees about individual decisions; Eq.(\ref{eq:equal_loss}) only looks at the average loss when $X,Y$ is a random draw from the population. Individual guarantees are usually impossible without tools beyond machine learning~\citep{zhao2021right}. In addition, Definition~\ref{def:decision_calibration} does not consider decision rules that can directly depend on $X$, as $\delta \in \druleall$ only depends on $X$ via the predicted probability $\hp(X)$. Studying decision rules that can directly depend on $X$ require tools such as multicalibration~\citep{hebert2018multicalibration} which are beyond the scope of this paper (See related work). 




In Definition~\ref{def:decision_calibration} we also define the special notion of $\Lc$-decision calibrated because given a set of loss functions $\Lc$, we are often only interested in the associated Bayes decision rules $\drule_\Lc$, i.e. the set of decision rules that are optimal under \textit{some} loss function. 
For the rest of the paper we focus on $\Lc$-decision calibration 
for simplicity. $\Lc$-decision calibration can capture the desiderata we discussed above formalized in the following proposition.
\begin{prop}
\label{prop:nice_consequences}
If a prediction function $\hp$ is $\Lc$-decision calibrated, then it satisfies 
\begin{align*}
    \forall \delta' \in \drule_\Lc, &\Eb_X\Eb_{Y \sim p^*(X)}[\ell(Y, \delta_\ell(\hp(X))) ] \leq \Eb_X\Eb_{Y \sim p^*(X)}[\ell(Y, \delta'(\hp(X))) ]  & \text{(No regret)} \\
   & \Eb_X \Eb_{\aY \sim \hp(X)} [\ell(\aY, \delta_{\ell}(\hp(X))) ] = \Eb_X \Eb_{Y \sim p^*(X)} [\ell(Y, \delta_{\ell}(\hp(X))) ] & \text{(Accurate loss estimation)} 
\end{align*}
\end{prop}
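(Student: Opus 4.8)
The plan is to derive both consequences directly from Definition~\ref{def:decision_calibration}, using the fact that $\Lc$-decision calibration is by definition $(\Lc;\drule_\Lc)$-decision calibration, so Eq.(\ref{eq:equal_loss}) holds for every $\ell \in \Lc$ and every $\delta \in \drule_\Lc$ sharing the action space of $\ell$.

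The accurate loss estimation claim is immediate: fix $\ell \in \Lc$ with action space $\Ac$, and note that its Bayes decision rule $\delta_\ell$ lies in $\drule_\Lc$ and trivially has the same action space $\Ac$. Plugging $\delta = \delta_\ell$ into Eq.(\ref{eq:equal_loss}) yields exactly the displayed equality $\Eb_X \Eb_{\aY \sim \hp(X)}[\ell(\aY, \delta_\ell(\hp(X)))] = \Eb_X \Eb_{Y \sim p^*(X)}[\ell(Y, \delta_\ell(\hp(X)))]$.

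For the no-regret claim, I would fix $\ell \in \Lc$ with action space $\Ac$ and an arbitrary competitor $\delta' \in \drule_\Lc$ with codomain $\Ac$ (the comparison under $\ell$ only type-checks for such $\delta'$). Applying Eq.(\ref{eq:equal_loss}) twice --- once with the pair $(\ell, \delta_\ell)$ and once with the pair $(\ell, \delta')$ --- rewrites both the true loss of $\delta_\ell$ and the true loss of $\delta'$ as their respective ``simulated'' losses under $\hp$. Hence it suffices to establish the inequality between simulated losses, namely $\Eb_X \Eb_{\aY \sim \hp(X)}[\ell(\aY, \delta_\ell(\hp(X)))] \le \Eb_X \Eb_{\aY \sim \hp(X)}[\ell(\aY, \delta'(\hp(X)))]$.

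This last step is where Definition~\ref{def:bayes_decision} does the work: for each fixed $x$, the action $\delta_\ell(\hp(x))$ minimizes $a \mapsto \Eb_{\aY \sim \hp(x)}[\ell(\aY, a)]$ over $a \in \Ac$, and since $\delta'(\hp(x)) \in \Ac$ is feasible, $\Eb_{\aY \sim \hp(x)}[\ell(\aY, \delta_\ell(\hp(x)))] \le \Eb_{\aY \sim \hp(x)}[\ell(\aY, \delta'(\hp(x)))]$; taking $\Eb_X$ of both sides and chaining through the two calibration identities gives the claim. The proof is short, and there is no real obstacle; the only points that need care are the action-space type-checking (ensuring $\delta'$ maps into $\Ac$ so the loss comparison is well defined) and, for full rigor, handling a possibly unattained $\arg\inf$ in the definition of $\delta_\ell$ by passing to near-optimal actions --- a non-issue when $\Ac$ is finite.
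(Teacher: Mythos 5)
Your proof is correct and is essentially the intended argument: apply Eq.(\ref{eq:equal_loss}) with $\delta=\delta_\ell$ to get accurate loss estimation, then sandwich the no-regret inequality by converting both true losses to simulated losses via calibration and invoking the pointwise Bayes optimality of $\delta_\ell$ under $\hp$. The paper treats the proposition as an immediate consequence of Definition~\ref{def:decision_calibration} in exactly this way, so there is nothing to add beyond your (appropriate) remarks on action-space type-checking and attainment of the $\arg\inf$.
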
 

\textbf{No regret} states that the Bayes decision rule $\delta_\ell$ is not worse than any alternative decision rule $\delta' \in \drule_{\Lc}$. 
In other words, the decision maker is incentivized to take optimal actions according to their true loss function.
That is, using the predictions given by $\hat{p}$, the decision maker cannot improve their actions by using a decision rule $\delta'$ that arises from a different loss function $\ell' \in \Lc$.

\textbf{Accurate loss estimation} states that for the Bayes decision rule $\delta_\ell$, the simulated loss on the left hand side (which the decision maker can compute before the outcomes are revealed) equals the true loss on the right hand side.
This ensures that the decision maker knows the expected loss that will be incurred over the distribution of individuals
and can prepare for it.
This is important because in most prediction tasks, the labels $Y$ are revealed with a significant delay or never revealed. For example, the hospital might be unable to follow up on the true outcome of all of its patients. 

In practice, the forecaster chooses some set $\Lc$ to achieve $\Lc$-decision calibration, and advertise it to decision makers. A decision makers can then check whether their loss function $\ell$ belongs to the advertised set $\Lc$. If it does, the decision maker should be confident that the Bayes decision rule $\delta_\ell$ has low loss compared to alternatives in $\drule_\Lc$, and they can know in advance the loss that will be incurred. 

\subsection{Decision Calibration Generalizes Existing Notions of Calibration}

We show that by varying the choice of loss class $\Lc$, decision calibration can actually express prior notions of calibration.
For example, consider confidence calibration, where among the samples whose the top probability is $\beta$, the top accuracy is indeed $\beta$. Formally, confidence calibration requires that 
\begin{align*}
    \Pr[Y = \arg\max \hp(X) \mid \max \hp(X) = \beta] = \beta.
\end{align*}
We show that a prediction function $\hp$ is confidence calibrated if and only if it is $\Lc_r$-decision calibration, where $\Lc_r$ is defined by 
\begin{align*}
    \Lc_r := \left\lbrace 
    \ell(y, a) = \mathbb{I}(y \neq a \land a \neq \bot) + \beta \cdot \mathbb{I}(a = \bot) :
    a \in \Yc \cup \lbrace \bot \rbrace, \forall \beta \in [0, 1]   \right\rbrace
\end{align*}
Intuitively, loss functions in $\Lc_r$ corresponds to the refrained prediction task: a decision maker chooses between reporting a class label, or reporting ``I don't know,'' denoted $\bot$.
She incurs a loss of $0$ for correctly predicting the label $y$, a loss of $1$ for reporting an incorrect class label, and a loss of $\beta < 1$ for reporting ``I don't know''.
If a decision maker's loss function belong to this simple class of losses $\Lc_r$, she can use a confidence calibrated prediction function $\hp$, because the two desiderata (no regret and accurate loss estimation) in Proposition~\ref{prop:nice_consequences} are true for her.
However, such ``refrained prediction`` decision tasks only account for a tiny subset of all possible tasks that are interesting to decision makers.
Similarly, classwise calibration is characterized through decision calibration using a class of loss functions that penalizes class-specific false positives and negatives.

In this way, decision calibration clarifies the implications of existing notions of calibration on decision making: relaxed notions of calibration correspond to decision calibration over restricted classes of losses.
In general, decision calibration provides a unified view of most existing notions of calibration as the following theorem shows. 

\begin{table}
\centerfloat
\vspace{-10mm}
\begin{adjustwidth}{-.3in}{-.3in}  
\begin{tabular}{p{0.42\linewidth}|p{0.49\linewidth}}
Existing Calibration Definitions & Associated Loss Functions \\
\hline
$\begin{array}{l} \text{Confidence Calibration \citep{guo2017calibration}} \\ \Pr[Y = \arg\max \hp(X) \mid \max \hp(X) = \beta] = \beta \\ \forall \beta \in [0, 1] \end{array}$ & 
$
    \Lc_r := \left\lbrace \begin{array}{ll}   
    \ell(y, a) = \mathbb{I}(y \neq a \land a \neq \bot) + \beta \cdot \mathbb{I}(a = \bot) : \\
    a \in \Yc \cup \lbrace \bot \rbrace, \forall \beta \in [0, 1] \end{array}  \right\rbrace
$ \\
\hline
$\begin{array}{l} \text{Classwise Calibration \citep{kull2019beyond}} \\
    \Eb[Y_c \mid \hat{p}_c(X) = \beta] = \beta, \forall c \in [C], \forall \beta \in [0, 1]
 \end{array} $   &
 $\Lc_{cr} := \left\lbrace \begin{array}{l} 
 \ell_c(y, a) = \mathbb{I}(a = \bot) + \beta_1 \cdot \mathbb{I}(y = c \land a = T) \\
 \qquad\qquad\qquad + \beta_2 \cdot \mathbb{I}(y \neq c \land a = F) :\\
 a \in  \lbrace T, F, \bot \rbrace,
 \forall \beta_1, \beta_2 \in \mathbb{R}, c \in [C] \end{array} \right\rbrace$
 \\
\hline
$\begin{array}{l} \text{Distribution Calibration \citep{kull2015novel} } \\ \Eb[Y \mid \hp(X) = q] = q, \forall q \in \Delta^C \end{array}$ &
$\Lc_{\text{all}} = \lbrace\ell:\Yc \times \Ac \mapsto \mathbb{R} \rbrace$
\end{tabular} 
\end{adjustwidth}
\vspace{3mm} 
\caption{
A prediction function $\hp$ satisfies the calibration definitions on the left if and only if it satisfies $\Lc$-decision calibration for the loss function families on the right (Theorem~\ref{thm:equivalence}).}
\vspace{-3mm}
\label{table:calibration}
\end{table}










\begin{restatable}{theorem}{thmequivalence}[Calibration Equivalence]
\label{thm:equivalence}
For any true distribution $p^*$, 
and for the loss function sets $\Lc_r, \Lc_{cr}$ defined in Table~\ref{table:calibration}, a prediction function $\hp$ is 
\begin{itemize} 
\item confidence calibrated iff it is $\Lc_r$-decision calibrated.

\item classwise calibrated iff it is $\Lc_{cr}$-decision calibrated. 

\item distribution calibrated iff it is $\Lc_{\text{all}}$-decision calibrated. 
\end{itemize} 
\end{restatable}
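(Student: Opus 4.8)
The plan is to prove each of the three equivalences by unpacking both sides into statements about conditional expectations of the label $Y$ given the prediction $\hp(X)$, and showing they coincide. The unifying theme is that $\Lc$-decision calibration, Eq.~(\ref{eq:equal_loss}), is linear in the loss $\ell$ and in the quantity $\Eb[Y - \hp(X) \mid \hp(X)]$ once we fix a decision rule $\delta$, so each equivalence reduces to: (i) the ``only if'' direction, where we plug the specific loss families from Table~\ref{table:calibration} into Eq.~(\ref{eq:equal_loss}) and simplify to recover the classical calibration identity; and (ii) the ``if'' direction, where we assume the classical identity and verify Eq.~(\ref{eq:equal_loss}) holds for all $\ell$ in the relevant family against its Bayes decision rule. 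I would organize the proof as three lemmas, one per bullet, with the distribution-calibration case first since it is the cleanest and the template for the rest.

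For the \textbf{distribution calibration} case: note that $\hp$ is distribution calibrated iff $\Eb[Y \mid \hp(X) = q] = q$ for all $q$ in the support of $\hp$, equivalently $\Eb_X[\,g(\hp(X))^\top (Y - \hp(X))\,] = 0$ for every bounded function $g : \Delta^C \to \Rbb^C$. The key observation is that for a fixed decision rule $\delta \in \druleall$ and loss $\ell$, the difference of the two sides of Eq.~(\ref{eq:equal_loss}) is exactly $\Eb_X[(\hp(X) - p^*(X))^\top \bm{\ell}(\delta(\hp(X)))]$ where $\bm{\ell}(a) \in \Rbb^C$ is the vector with $c$-th coordinate $\ell(e_c, a)$ (using that $\Eb_{Y\sim p}[\ell(Y,a)] = p^\top \bm{\ell}(a)$ by linearity in the one-hot label). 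Taking $\Lc = \Lall$ means the composite map $x \mapsto \bm{\ell}(\delta(\hp(x)))$ ranges over essentially all bounded functions of $\hp(x)$ (one can realize any target vector function by choosing an appropriate single-action-dependent loss), so $\Lall$-decision calibration is equivalent to $\Eb_X[g(\hp(X))^\top(p^*(X) - \hp(X))] = 0$ for all bounded $g$, which is distribution calibration. I would state and use this ``vector form'' as a standalone observation since it also drives the other two cases.

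For \textbf{confidence calibration}: for $\ell \in \Lc_r$ with threshold $\beta$, compute $\Eb_{\aY \sim q}[\ell(\aY, a)] = 1 - q_a$ for $a \in \Yc$ and $= \beta$ for $a = \bot$; hence the Bayes decision rule $\delta_\ell(q)$ reports $\arg\max_c q_c$ if $1 - \max_c q_c < \beta$ (i.e. $\max q > 1-\beta$) and $\bot$ otherwise. Plugging $\delta_{\ell}$ into Eq.~(\ref{eq:equal_loss}) and subtracting, the $\bot$ branch contributes zero (the loss $\beta$ does not depend on $Y$), so the condition becomes $\Eb_X[\mathbb{I}(\max\hp(X) > 1-\beta)(\,\hp(X)_{\arg\max\hp(X)} - \Pr[Y = \arg\max\hp(X)\mid X]\,)] = 0$ for all $\beta$, and sweeping $\beta$ over $[0,1]$ while also varying which action is reported gives exactly $\Pr[Y=\arg\max\hp(X)\mid \max\hp(X)=s] = s$ for (almost) every level set $s$, i.e. confidence calibration. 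The converse is the same computation run backwards. The \textbf{classwise calibration} case is analogous: for $\ell_c \in \Lc_{cr}$ one computes $\Eb_{\aY\sim q}[\ell_c(\aY,T)] = \beta_1 q_c$, $\Eb[\ell_c(\aY,F)] = \beta_2(1-q_c)$, $\Eb[\ell_c(\aY,\bot)] = 1$, so the Bayes rule thresholds $q_c$ at a level determined by $\beta_1,\beta_2$, and after subtracting sides Eq.~(\ref{eq:equal_loss}) reduces to $\Eb_X[h(\hp_c(X))(\hp_c(X) - \Eb[Y_c\mid X])] = 0$ for all $c$ and all bounded $h$ realizable by sweeping $\beta_1,\beta_2$, which is $\Eb[Y_c\mid\hp_c(X)=\beta]=\beta$.

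The main obstacle I anticipate is the subtle step in each ``iff'' where one must argue that sweeping the scalar parameters ($\beta$, or $\beta_1,\beta_2$) of the restricted loss family, together with the choice of action, actually generates a rich enough collection of test functions to pin down the full conditional-expectation identity — and conversely that the restricted family is not ``too rich'' (the Bayes rule is a threshold/argmax, so only a one-parameter family of decision regions appears, which is exactly what matches the one-parameter classical definitions). Concretely, for confidence calibration I need that the indicator functions $\mathbb{I}(\max\hp(X) > t)$ as $t$ ranges over $[0,1]$ separate the level sets of $\max\hp(X)$ (a standard measure-theoretic fact), and for classwise I need the analogous statement for $\hp_c(X)$; handling the boundary/tie cases in the $\arg\inf$ defining the Bayes rule and the measure-zero level sets requires a little care but no new ideas. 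I would also remark that because these families $\Lc_r, \Lc_{cr}$ use a bounded number of actions, the equivalences foreshadow the feasibility result of the next section, whereas $\Lall$ does not.
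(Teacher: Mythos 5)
Your proposal is correct and takes essentially the same route as the paper's proof: for each family you compute the Bayes rule, reduce the gap between the two sides of Eq.~(\ref{eq:equal_loss}) to indicator-weighted (resp.\ arbitrary bounded-function-weighted) expectations of $\hp(X)-p^*(X)$, and invoke the conditional-expectation characterization (the paper's Lemma~\ref{lemma:conditional_exp}, your ``standard measure-theoretic fact''), while for the distribution case the paper likewise realizes the identity decision rule via a loss with action space $\Delta^C$ (the log loss) and a dual-norm argument to test against all bounded functions of $\hp(X)$. No genuine gap.
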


For proof of this theorem see Appendix~\ref{appendix:proof}. In Table~\ref{table:calibration}, confidence and classwise calibration are weak notions of calibration; correspondingly the loss function families $\Lc_r$ and $\Lc_{\text{cr}}$ are also very restricted.  

On the other hand, distribution calibration (i.e. $\Eb[Y \mid \hp(X) = q] = q, \forall q \in \Delta^C$) is equivalent to $\Lall$-decision calibration.
This means that a distribution calibrated predictor guarantees
the no regret and accurate loss estimation properties as in Proposition~\ref{prop:nice_consequences} to a decision maker holding any loss functions.
Unfortunately, distribution calibration is very challenging to verify or achieve. To understand the challenges, consider certifying whether a given predictor $\hp$ is distribution calibrated.
Because 
we need to reason about the conditional distribution $\Eb[Y \mid \hp(X) = q]$ for every $q$ that $\hp$ can predict (i.e. the support of $\hp$), the necessary sample complexity 
grows linearly in the support of $\hp$. Of course, for a trivial predictors that map all inputs $x$ to the same prediction $q_0$ (i.e. $\hp(x) = q_0, \forall x \in \Xc$) distribution calibration is easy to certify~\citep{widmann2019calibration}, but such predictors have no practical use.

Our characterization of distribution calibration further 
sheds light on why it is so difficult to achieve.
$\Lall$ consists of \textit{all} loss function, including all loss functions $\ell: \Yc \times \Ac \to \Rbb$ whose action space $\Ac$ contains exponentially many elements (e.g. $|\Ac| = 2^C$).
The corresponding decision rules $\delta\in \drule_{\Lall}: \Delta^C \to \Ac$ may also map $\Delta^C$ to exponentially many possible values.
Enforcing Definition~\ref{def:decision_calibration} for such complex loss functions and decision rules is naturally difficult.

\subsection{Decision Calibration over Bounded Action Space} 

In many contexts, directly optimizing for distribution calibration may be overkill.
In particular, in most realistic settings, decision makers tend to have a bounded number of possible actions, so the relevant losses come from $\Lc^K$ for reasonable $K \in \Nbb$.
Thus, we consider obtaining decision calibration for all loss functions defined over a bounded number of actions $K$.
In the remainder of the paper, we focus on this restriction of decision calibration to the class of losses with bounded action space; we reiterate the definition of decision calibration for the special case of $\Lc^K$.



\begin{definition}[$\Lc^K$-Decision Calibration]
\label{def:decision_calibration_lk}
Let $\Lc^K$ be the set of all loss functions with $K$ actions $
    \Lc^K = \lbrace \ell: \Yc \times \Ac \to \Rbb, |\Ac| = K \rbrace 
$, we say that a prediction $\hp$ is $\Lc^K$-decision calibrated (with respect to $p^*$) if $\forall \ell \in \Lc^K$ and $\delta \in \drule_{\Lc^K}$
\begin{align}
    \Eb_X \Eb_{\aY \sim \hp(X)} [\ell(\aY, \delta(\hp(X))) ] = \Eb_X \Eb_{Y \sim p^*(X)} [\ell(Y, \delta(\hp(X))) ] \label{eq:equal_loss}
\end{align}
\end{definition}

The key result (that we show in Section 4) is that $\Lc^K$-decision calibration can be efficiently verified and achieved for \textit{all} predictors.
Specifically, we show that the sample and time complexity necessary to learn $\Lc^K$-decision calibrated predictors depends on the number of actions $K$, not on the support of $\hat{p}$.
This is in contrast to the standard approach for establishing distribution calibration, which scales with the support of $\hp$. 
Before studying the algorithmic aspects of decision calibration, we argue that $\Lc^K$-decision calibration implies similar performance guarantees as distribution calibration to decision makers with bounded actions.


\paragraph{From decision calibration to distribution calibration} 
We argue that in the practical context where decision makers have a bounded number of actions, $\Lc^K$-decision calibration is actually as strong as distribution calibration.
Specifically, we show that given a $\Lc^K$-decision calibrated predictor $\hp$ and a chosen loss $\ell \in \Lc^K$, a decision maker can construct a post-processed predictor $\hp_\ell$ that is (1) distribution calibrated; and (2) the Bayes action remains unchanged under $\hp_\ell$ (i.e. $\delta_\ell(\hp(x)) = \delta_\ell(\hp_\ell(x))$).
This means the expected loss $\ell$ of decision making under $\hp$ is maintained under the post-processed predictor $\hp_\ell$.
In other words, for the purposes of decision making, the decision calibrated predictor $\hat{p}$ is identical to the derived predictor that satisfies distribution calibration $\hat{p}_\ell$.

This argument is subtle: on the surface, the idea that decision calibration (which is feasible) goes against the intuition that distribution calibration is difficult to achieve.
Note, however, that the difficulty in achieving distribution calibration comes from the large support of $\hp$. 
The key insight behind our construction is that, given a \textit{fixed} loss function $\ell$ over $K$ actions, it is possible to
construct a $\hp_\ell$ with small support 
without losing any information necessary 
for making decisions according to $\ell$. In other words, for making good decisions according to \emph{all} loss functions $\ell$, $\hp$ might need to have large support; but for a \emph{fixed} $\ell$ with $K$ actions, $\hp_\ell$ does not need to have large support. 
In this way, we can learn a single $\Lc^K$-decision calibrated predictor (with possibly large support), but view it as giving rise to many different loss-specific distribution calibrated predictors $\hp_\ell$ for each $\ell \in \Lc^K$.
Formally, we show the following proposition.

\begin{restatable}{prop}{dectodist} 
\label{thm:dec2dist}
Suppose $\hp$ is $\Lc^K$-decision calibrated.
Then, for any loss function $\ell \in \Lc^K$, we can construct a predictor $\hp_\ell$ of support size $\le K$, such that:
\begin{itemize}
    \item $\hp_\ell$ is distribution calibrated 
    \item the optimal decision rule is preserved $\delta_\ell(\hp_\ell(x)) = \delta_\ell(\hp(x))$ for all $x \in \Xc$;\\
    thus, $\E[\ell(Y,\delta_\ell(\hp_\ell(X)))] = \E[\ell(Y,\delta_\ell(\hp(X)))]$.
\end{itemize}
\end{restatable}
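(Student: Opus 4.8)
The plan is to build $\hp_\ell$ explicitly as the conditional mean of $Y$ given the action that $\delta_\ell\circ\hp$ assigns to $X$, and then to invoke $\Lc^K$-decision calibration on a small family of auxiliary losses to certify that this coarsening leaves the Bayes-optimal action unchanged. Concretely, fix $\ell \in \Lc^K$ with action set $\Ac = [K]$, and for each $a \in [K]$ collect the losses into a vector $L_a \in \Rbb^C$ with entries $(\ell(y,a))_{y \in \Yc}$, so that $\E_{\aY \sim q}[\ell(\aY,a)] = \langle q, L_a \rangle$ and, since $Y$ is one-hot, $\ell(Y,a) = \langle Y, L_a \rangle$; thus $\delta_\ell(q) = \arg\min_{a \in [K]} \langle q, L_a \rangle$ up to a fixed tie-breaking rule. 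I would partition $\Xc$ into the level sets $R_a = \{x : \delta_\ell(\hp(x)) = a\}$ and define $\hp_\ell(x) := \E[\,Y \mid X \in R_{\delta_\ell(\hp(x))}\,]$ on the positive-probability regions (arbitrarily on the null ones). Each $\E[Y\mid X\in R_a]$ is a convex combination of one-hot vectors and hence lies in $\Delta^C$, and $\hp_\ell$ takes at most $K$ distinct values, giving support size $\le K$.

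Distribution calibration of $\hp_\ell$ is then essentially free, using only the tower rule and $\E[Y\mid X] = p^*(X)$: the random variable $\hp_\ell(X)$ is measurable with respect to the finite partition $\{R_a\}$ and is region-wise equal to $\E[Y \mid \{R_a\}]$, so $\E[Y \mid \hp_\ell(X)] = \E[\,\E[Y \mid \{R_a\}] \mid \hp_\ell(X)\,] = \hp_\ell(X)$; equivalently, each event $\{\hp_\ell(X) = q\}$ is a union of regions all having conditional $Y$-mean $q$, so the conditional mean of $Y$ over that union is again $q$.

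The substance of the proof --- and the only place $\Lc^K$-decision calibration is needed --- is showing the Bayes action is preserved. Fix $x$, write $a^\star = \delta_\ell(\hp(x))$, $R = R_{a^\star}$, and $q' = \hp_\ell(x) = \E[Y \mid X \in R] = \E[p^*(X) \mid X \in R]$; I need $\langle q', L_{a^\star} \rangle \le \langle q', L_b \rangle$ for every $b \in [K]$. For each $b$, I would introduce the auxiliary loss $\ell_b$ on the same action set $[K]$ given by $\ell_b(y,a) := \mathbb{I}(a = a^\star)\,(\ell(y,a^\star) - \ell(y,b))$, which is a member of $\Lc^K$, and apply Definition~\ref{def:decision_calibration_lk} to the pair $(\ell_b, \delta_\ell)$ (both defined over the action set $[K]$, so the definition applies):
\[ \E_X \E_{\aY \sim \hp(X)}[\ell_b(\aY, \delta_\ell(\hp(X)))] \;=\; \E_X \E_{Y \sim p^*(X)}[\ell_b(Y, \delta_\ell(\hp(X)))]. \]
The indicator restricts both sides to $X \in R$: the left side becomes $\E_X[\mathbb{I}(X \in R)\,\langle \hp(X), L_{a^\star} - L_b\rangle]$, which is $\le 0$ pointwise on $R$ because $a^\star$ is Bayes-optimal for $\hp(x)$ there, while the right side becomes $\Pr[X \in R]\cdot\langle q', L_{a^\star} - L_b\rangle$. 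Dividing by $\Pr[X\in R] > 0$ (if the region is null, $x$ lies in a measure-zero set and is irrelevant) yields $\langle q', L_{a^\star}\rangle \le \langle q', L_b\rangle$, so $a^\star \in \arg\min_a \langle q', L_a\rangle$; with a consistent tie-breaking convention this gives $\delta_\ell(\hp_\ell(x)) = a^\star = \delta_\ell(\hp(x))$, and the expected-loss identity follows immediately.

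I expect the main obstacle to be choosing the right probe losses: decision calibration is a statement about population averages over all of $\Xc$, whereas I need a conclusion localized to one region $R_{a^\star}$ and one competitor action $b$, and it is precisely the factor $\mathbb{I}(a = a^\star)$ in $\ell_b$ that makes the ``simulated'' side of the identity collapse to a region-restricted, sign-definite quantity --- after which the ``real'' side inherits that sign for free. A secondary wrinkle is tie-breaking in $\delta_\ell$: the argument only certifies that $a^\star$ remains among the minimizers at $q'$, not that a prescribed rule re-selects it; this is harmless for the stated loss equality, since any minimizer at $q'$ has the same region-averaged loss $\langle q', L_\cdot\rangle = \E[\ell(Y,\cdot)\mid X\in R]$, so the expected-loss conclusion holds regardless of how ties are resolved.
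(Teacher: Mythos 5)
Your proof is correct, but it takes a genuinely different route from the paper's. The paper compresses the \emph{predictions}: it sets $\hp_\ell(x) = \Eb[\hp(X) \mid \delta_\ell(\hp(X)) = \delta_\ell(\hp(x))]$, invokes decision calibration (through the characterization in Proposition~\ref{prop:decision_equivalence}, which gives $\Eb[\hp(X)\mid \delta_\ell(\hp(X))=a] = \Eb[Y\mid \delta_\ell(\hp(X))=a]$ on each region) to establish \emph{distribution calibration}, and then gets preservation of the Bayes action for free: since $\hp_\ell(x)$ is a convex average of predictions $\hp(x')$ that all satisfy the linear inequalities $\langle \ell_{a^\star},\hp(x')\rangle \le \langle \ell_a,\hp(x')\rangle$, averaging preserves them, with no further use of the hypothesis. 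You instead compress the \emph{true labels}, $\hp_\ell(x) = \Eb[Y \mid \delta_\ell(\hp(X)) = \delta_\ell(\hp(x))]$, so distribution calibration is automatic from the tower property, and you spend the decision-calibration hypothesis on the action-preservation step via the probe losses $\ell_b(y,a) = \mathbb{I}(a=a^\star)(\ell(y,a^\star)-\ell(y,b))$ paired with $\delta_\ell$ in Definition~\ref{def:decision_calibration_lk}; this is a clean localization trick, and it works directly from the definition without routing through Proposition~\ref{prop:decision_equivalence}. Note that under the hypothesis the two constructions coincide almost surely (the region-averages of $\hp$ and of $Y$ agree), so the difference is purely in where the hypothesis is used: the paper's allocation makes action preservation hypothesis-free by convexity, while yours makes distribution calibration hypothesis-free and isolates exactly which ``indistinguishability tests'' (indicator-weighted loss differences against $\delta_\ell$) drive the decision-theoretic conclusion. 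Your handling of ties and of null regions matches the paper's level of rigor, and your explicit remark that any minimizer at $q'$ attains the same region-averaged loss is a welcome touch the paper leaves implicit.
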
 
Proposition~\ref{thm:dec2dist} is proved in Appendix~\ref{appendix:dec2dist}.



\section{Achieving Decision Calibration with PAC Guarantees}

\subsection{Approximate $\Lc^K$ Decision Calibration is Verifiable and Achievable} 
Decision calibration in Definition~\ref{def:decision_calibration} usually cannot be achieved perfectly. The definition has to be relaxed to allow for statistical and numerical errors. 
To meaningfully define approximate calibration we must assume that the loss functions are bounded, i.e. no outcome $y \in \Yc$ and action $a \in \Ac$ can incur an infinite loss. 
In particular, we bound $\ell$ by its 2-norm 
$
    \max_a \lVert \ell(\cdot, a) \rVert_2 := \max_a \sqrt{\sum_{y \in \Yc} \ell(y, a)^2}
$.~\footnote{The choice of 2-norm is for convenience. All $p$-norms are equivalent up to a multiplicative factor polynomial in $C$, so our main theorem (Theorem~\ref{thm:pac_informal}) still hold for any $p$-norms up to the polynomial factor.} 
Now we can proceed to define approximate decision calibration. In particular, we compare the difference between the two sides in Eq.(\ref{eq:equal_loss}) of Definition~\ref{def:decision_calibration} with the maximum magnitude of the loss function. 
\begin{definition}[Approximate decision calibration]
\label{def:decision_calibration_relaxed}
A prediction function $\hp$ is $(\Lc, \epsilon$)-decision calibrated (with respect to $p^*$) if $\forall \ell \in \Lc$ and $\delta \in \drule_\Lc$
\begin{align}
    \left\lvert \Eb [\ell(\aY, \delta(\hp(X))) ] - \Eb [\ell(Y, \delta(\hp(X))) ]  \right\rvert \leq \epsilon \sup_{a \in \Ac} \lVert \ell(\cdot, a) \rVert_2
\end{align}
\end{definition}
Definition~\ref{def:decision_calibration_relaxed} is a relaxation of Definition~\ref{def:decision_calibration}: if a prediction function is $(\Lc, 0)$-decision calibrated, then it is $\Lc$-decision calibrated (Definition~\ref{def:decision_calibration}). 

The main observation in our paper is that for the set of loss functions with $K$ actions, $(\Lc^K,\epsilon)$-decision calibration can be verified and achieved with polynomially many samples.
\begin{theorem}[Main Theorem, informal]
\label{thm:pac_informal}
There is an algorithm, such that for any predictor $\hp$ and given polynomially (in $K, C, 1/\epsilon$) many samples, can with high probability 
\begin{enumerate}
    \item correctly decide if $\hp$ satisfies $(\Lc^K,\epsilon)$-decision calibration
    \item output a new predictor $\hp'$ that satisfies $(\Lc^K,\epsilon)$-decision calibration without degrading the original predictions (in terms of $L_2$ error).
\end{enumerate}
\end{theorem}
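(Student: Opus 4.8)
\emph{Proof proposal.} The plan is to collapse $(\Lc^K,\epsilon)$-decision calibration into a single scalar discrepancy that is estimable from a polynomial sample, and then to recalibrate by a boosting-style loop whose potential is the $L_2$ (Brier) error, so that progress is automatic and the output never degrades. For the first step, fix a decision rule $\delta:\Delta^C\to[K]$ and define, for each action $a$, the per-region error vector $v_{\delta,a}:=\Eb_X[\mathbb{I}(\delta(\hp(X))=a)(\hp(X)-p^*(X))]\in\Rbb^C$. Since $Y,\aY$ are one-hot, $\Eb_{\aY\sim\hp(x)}[\ell(\aY,a)]=\langle\ell(\cdot,a),\hp(x)\rangle$ and likewise for $p^*$, so the two sides of Eq.(\ref{eq:equal_loss}) for a pair $(\ell,\delta)$ differ by exactly $\sum_{a=1}^{K}\langle\ell(\cdot,a),v_{\delta,a}\rangle$. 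Because every $C\times K$ real matrix is a valid element of $\Lc^K$, maximizing this over $\ell$ with $\sup_a\lVert\ell(\cdot,a)\rVert_2\le 1$ gives precisely $\sum_a\lVert v_{\delta,a}\rVert_2$ (align each column with $v_{\delta,a}$), with the matching Cauchy--Schwarz upper bound for arbitrary $\ell$. Hence, recalling Definition~\ref{def:decision_calibration_relaxed},
\[
  \hp\ \text{is}\ (\Lc^K,\epsilon)\text{-decision calibrated}\quad\Longleftrightarrow\quad \Phi(\hp):=\sup_{\delta\in\drule_{\Lc^K}}\ \sum_{a=1}^{K}\lVert v_{\delta,a}\rVert_2\ \le\ \epsilon,
\]
where $\drule_{\Lc^K}$ is exactly the family of $K$-piece ``argmin'' rules $q\mapsto\arg\min_{a}\langle w_a,q\rangle$ with $w=(w_1,\dots,w_K)\in\Rbb^{CK}$ (ties broken arbitrarily).

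Next I would estimate $\Phi$ from a sample $\{(x_i,y_i)\}_{i=1}^n$: replace $p^*$ by the empirical distribution to get $\hat v_{\delta,a}$ and $\hat\Phi$. The only real work is uniform convergence over $\delta\in\drule_{\Lc^K}$: for fixed $a$, the map $w\mapsto\mathbb{I}(\delta_w(\hp(x))=a)$ is an intersection of $K-1$ homogeneous halfspaces in $\hp(x)\in\Rbb^C$, so this family has VC dimension $\tilde{O}(CK)$; multiplying by the fixed $[-1,1]$-valued factor $\hp(x)_c-y_c$ and union-bounding over $a\in[K],\,c\in[C]$ gives $\sup_{\delta,a,c}\lvert\hat v_{\delta,a,c}-v_{\delta,a,c}\rvert\le\eta$ with probability $1-\rho$ once $n=\tilde{O}(CK\eta^{-2}\log(1/\rho))$. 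Then $\lvert\hat\Phi(\hp)-\Phi(\hp)\rvert\le K\sqrt{C}\,\eta$, so taking $\eta\asymp\epsilon/(K\sqrt{C})$ and thresholding $\hat\Phi$ decides $(\Lc^K,\epsilon)$-decision calibration (with the usual $\tfrac\epsilon2$ vs.\ $\epsilon$ promise gap inherent in the informal statement); this is part~(1).

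For part~(2), track the potential $R(\hp)=\Eb_X\lVert\hp(X)-p^*(X)\rVert_2^2=\Eb_{X,Y}\lVert\hp(X)-Y\rVert_2^2-\mathrm{const}$, which is bounded by $2$. If the test returns a witness $\delta$ with empirical gap $>\epsilon/2$, then $\sum_a\lVert v_{\delta,a}\rVert_2^2\ge\tfrac1K(\sum_a\lVert v_{\delta,a}\rVert_2)^2=\Omega(\epsilon^2/K)$ by Cauchy--Schwarz (up to the $\eta$ slack); update $\hp(x)\leftarrow\Pi_{\Delta^C}(\hp(x)-\hat v_{\delta,\delta(\hp(x))})$. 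Expanding the square, the pre-projection step changes $R$ by $-\sum_a\lVert v_{\delta,a}\rVert_2^2\,(2-\Pr[\delta(\hp(X))=a])\le-\tfrac12\sum_a\lVert v_{\delta,a}\rVert_2^2$ after absorbing the $O(\eta)$ estimation error, and the Euclidean projection onto $\Delta^C$ can only move $\hp(x)$ closer to $p^*(x)\in\Delta^C$. So each round decreases $R$ by $\Omega(\epsilon^2/K)$, the loop halts within $T=O(K/\epsilon^2)$ rounds, and at termination $\Phi(\hp)\le\epsilon$ while $R$ (hence the $L_2$ error) is no larger than for the input predictor. Drawing a fresh batch of $\tilde{O}(\mathrm{poly}(C,K,1/\epsilon))$ samples per round and union-bounding over the $T$ rounds keeps the total sample size polynomial and the failure probability small.

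The main obstacle is the interaction of Step~2 with the adaptivity of Step~3: the predictor being tested changes each round, so the uniform-convergence argument must cover the entire parametric family $\drule_{\Lc^K}$ (this is what the $\tilde{O}(CK)$ VC bound buys) \emph{and} give per-round independence (handled by sample splitting, at the cost of a harmless polynomial blowup). A secondary point is actually producing a near-optimal witness $\delta$ --- maximizing $\sum_a\lVert\hat v_{\delta_w,a}\rVert_2$ over $w\in\Rbb^{CK}$ is non-convex --- but since the theorem promises sample rather than time efficiency, it suffices to search over the finitely many decision rules the sample induces; the efficient optimization I would treat separately.
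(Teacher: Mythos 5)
Your proposal is correct and follows essentially the same route as the paper: the reduction of $(\Lc^K,\epsilon)$-decision calibration to $\sup_{\delta}\sum_a\lVert v_{\delta,a}\rVert_2\le\epsilon$ over linear-partition rules (the paper's Proposition~\ref{prop:decision_equivalence}), uniform convergence via VC/Rademacher bounds for intersections of halfspaces, and a potential-function recalibration loop with the $L_2$/Brier potential, projection inequality, and $O(K/\epsilon^2)$ rounds (Theorems~\ref{thm:convergence1} and~\ref{thm:convergence}). Your only deviations—subtracting the unnormalized region-error vector $v_{\delta,a}$ instead of the conditional-mean correction $d_a=v_a/\Pr[\delta=a]$, and using per-round sample splitting rather than running the whole argument on one empirical sample—are minor variants that yield the same guarantees.
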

As with distribution calibration, trivial predictors $\hp(x) \equiv \Eb[Y], \forall x$ 
satisfy $(\Lc^K,0)$-decision calibration.
To maintain a meaningful prediction function we also require ``sharpness'' which we measure by $L_2$ error.
We guarantee that the $L_2$ error of $\hp'$ can only \emph{improve} under post-processing; that is, $\Eb[\lVert \hp'(X) - Y \rVert_2^2] \leq \Eb[\lVert \hp(X) - Y \rVert_2^2]$.
In fact, the algorithm works by iteratively updating the predictions to make progress in $L_2$. 
For rest of this section, we propose concrete algorithms that satisfy Theorem~\ref{thm:pac_informal}.


\subsection{Verification of Decision Calibration} 

This section focuses on the first part of Theorem~\ref{thm:pac_informal} where we certify $(\Lc^K, \epsilon)$-decision calibration. A naive approach would use samples to directly estimate
\begin{align}
 \sup_{\delta \in \drule_{\Lc^K}}   \sup_{\ell \in \Lc^K}  \left\lvert \Eb [\ell(\aY, \delta(\hp(X))) ] - \Eb [\ell(Y, \delta(\hp(X))) ]  \right\rvert / \left( \sup_{a \in \Ac} \lVert \ell(\cdot, a) \rVert_2 \right)
\end{align}
and compare it with $\epsilon$. However, the complexity of this optimization problem poses challenges to analysis. We will make several observations to transform this complex optimization problem to a simple problem that resembles linear classification. 

\paragraph{Observation 1.} The first observation is that we do not have to take the supremum over $\Lc^K$ because for any choice of $\delta \in \drule$ by simple calculations (details in the Appendix) we have
\begin{align}
  \sup_{\ell \in \Lc^K}  \frac{\left\lvert \Eb [\ell(\aY, \delta(\hp(X))) ] - \Eb [\ell(Y, \delta(\hp(X))) ]  \right\rvert }{\sup_{a \in \Ac} \lVert \ell(\cdot, a) \rVert_2} =  \sum_{a=1}^K \left\lVert \Eb [ (\aY - Y) \mathbb{I}(\delta(\hp(X)) = a) ] \right\rVert_2  \label{eq:norm_calibration}
\end{align}
Intuitively on the right hand side, $\delta$ partitions the probabilities $\Delta^C$ based on the optimal decision $\Delta_1 := \lbrace q \in \Delta^C, \mathbb{I}(\delta(q) = 1) \rbrace, \cdots, \Delta_K := \lbrace q \in \Delta^C, \mathbb{I}(\delta(q) = K) \rbrace $. For each partition $\Delta_k$ we measure the difference between predicted label and true label \textit{on average}, i.e. we compute $\Eb[ (\aY - Y) \mathbb{I}(\hp(X) \in \Delta_k)]$.

\paragraph{Observation 2.}  
We observe that the partitions of $\Delta^C$ are defined by linear classification boundaries. Formally, we introduce a new notation for the linear multi-class classification functions
\begin{align}
    \critic = \left\lbrace b_w \mid \forall w \in \Rbb^{K \times C} \right\rbrace\label{eq:bk_def}&&
    \textrm{where~~}b_w(q,a) = \Ibb(a = \arg\sup_{a' \in [K]} \langle q, w_{a'}\rangle)
\end{align}
Note that this new classification task is a tool to aid in understanding decision calibration, and bears no relationship with the original prediction task (predicting $Y$ from $X$). Intuitively $w$ defines the weights of a linear classifier; given input features $q \in \Delta^C$ and a candidate class $a$, $b_w$ outputs an indicator: $b_w(q, a)=1$ if the optimal decision of $q$ is equal to $a$ and $0$ otherwise. 

The following equality draws the connection between  Eq.(\ref{eq:norm_calibration}) and linear classification. The proof is simply a translation from the original notations to the new notations. 
\begin{align}
    \sup_{\delta \in \drule_{\Lc^K}} \sum_{a=1}^K \left\lVert \Eb [ (\aY - Y) \mathbb{I}(\delta(\hp(X))) = a) ] \right\rVert_2 = \sup_{b \in \critic}\sum_{a=1}^K \left\lVert \Eb [ (\aY - Y) b(\hp(X), a) ] \right\rVert_2
\end{align}

The final outcome of our derivations is the following proposition (Proof in Appendix~\ref{appendix:proof_pac})
\begin{restatable}{prop}{decisionequivalence}
\label{prop:decision_equivalence} 
A predictor $\hp$ satisfies $(\Lc^K, \epsilon)$-decision calibration if and only if 
\begin{align}
    \sup_{b \in \critic}\sum_{a=1}^K \left\lVert \Eb [ (\aY - Y) b(\hp(X), a) ] \right\rVert_2 \leq \epsilon \label{eq:decision_calibration_equivalent}
\end{align}
\end{restatable}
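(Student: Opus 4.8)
The plan is to unfold the definition of $(\Lc^K,\epsilon)$-decision calibration into a pair of nested suprema and evaluate them one at a time, using exactly the two observations already sketched before the statement. By Definition~\ref{def:decision_calibration_relaxed} applied with $\Lc = \Lc^K$ and $\drule = \drule_{\Lc^K}$ (every $\ell \in \Lc^K$ and every $\delta \in \drule_{\Lc^K}$ share the same $K$-element action space, so the type-check is automatic), $\hp$ is $(\Lc^K,\epsilon)$-decision calibrated if and only if
\begin{align*}
  \sup_{\delta \in \drule_{\Lc^K}}\ \sup_{\ell \in \Lc^K}\ \frac{\left\lvert \Eb[\ell(\aY,\delta(\hp(X)))] - \Eb[\ell(Y,\delta(\hp(X)))] \right\rvert}{\sup_{a \in \Ac} \lVert \ell(\cdot,a)\rVert_2}\ \le\ \epsilon .
\end{align*}
I would first evaluate the inner supremum over $\ell$ for a fixed $\delta$ (Observation~1), then the outer supremum over $\delta$ (Observation~2), and finally read off the claimed equivalence with Eq.(\ref{eq:decision_calibration_equivalent}).

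\textbf{Inner supremum (Observation 1).} Here I exploit that every $y \in \Yc$ is a one-hot vector, so $\ell(y,a) = \langle \ell(\cdot,a), y\rangle$ where $\ell(\cdot,a) \in \Rbb^C$ is the vector of per-class losses for action $a$. Splitting the expectation according to the $K$ possible values of $\delta(\hp(X))$ yields
\begin{align*}
  \Eb[\ell(\aY,\delta(\hp(X)))] - \Eb[\ell(Y,\delta(\hp(X)))] = \sum_{a=1}^K \left\langle \ell(\cdot,a),\ v_a \right\rangle, \qquad v_a := \Eb\!\left[(\aY - Y)\,\Ibb(\delta(\hp(X)) = a)\right] \in \Rbb^C .
\end{align*}
Since $\Lc^K$ contains every loss function with $K$ actions, the vectors $\ell(\cdot,1),\dots,\ell(\cdot,K)$ range freely and independently over $\Rbb^C$; the ratio is scale-invariant, so I may impose $\lVert \ell(\cdot,a)\rVert_2 \le 1$ for all $a$ and maximize $\bigl\lvert \sum_a \langle \ell(\cdot,a), v_a\rangle \bigr\rvert$. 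Cauchy--Schwarz bounds each summand by $\lVert v_a\rVert_2$, with equality at $\ell(\cdot,a) = v_a/\lVert v_a\rVert_2$ (and the summand vanishes regardless when $v_a = 0$); these choices are simultaneously feasible, so the inner supremum equals $\sum_{a=1}^K \lVert v_a\rVert_2$, i.e.\ the right-hand side of Eq.(\ref{eq:norm_calibration}). Boundedness of the losses, which we assume in Definition~\ref{def:decision_calibration_relaxed}, is exactly what keeps the normalizing denominator finite and nonzero.

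\textbf{Outer supremum (Observation 2) and assembly.} Next I identify $\drule_{\Lc^K}$ with the linear classifiers $\critic$. The Bayes rule of $\ell$ is $\delta_\ell(q) = \arg\inf_{a} \Eb_{\aY\sim q}[\ell(\aY,a)] = \arg\inf_{a} \langle q, \ell(\cdot,a)\rangle = \arg\sup_{a} \langle q, w_a\rangle$ with $w_a := -\ell(\cdot,a)$. As $\ell$ ranges over $\Lc^K$ the matrix $w = (w_1,\dots,w_K)$ ranges over all of $\Rbb^{K\times C}$, so the family of partitions of $\Delta^C$ realized by decision rules in $\drule_{\Lc^K}$ coincides with the family realized by the classifiers in $\critic$; concretely, $\Ibb(\delta(\hp(X)) = a)$ and $b_w(\hp(X),a)$ describe the same collection of indicator functions. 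Hence $\sup_{\delta \in \drule_{\Lc^K}} \sum_{a=1}^K \lVert v_a \rVert_2 = \sup_{b \in \critic} \sum_{a=1}^K \bigl\lVert \Eb[(\aY - Y)\,b(\hp(X),a)] \bigr\rVert_2$, and chaining this with the inner-supremum computation shows that $(\Lc^K,\epsilon)$-decision calibration is equivalent to Eq.(\ref{eq:decision_calibration_equivalent}).

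\textbf{Main obstacle.} I expect the delicate point to be the tie-breaking subtlety in the outer step: both $\arg\inf$ (in the Bayes rule) and $\arg\sup$ (in the definition of $b_w$) are set-valued on the set of $q$ where two or more actions are simultaneously optimal, so one must check that the two families of partitions really coincide and not merely up to this ambiguity. I would handle this by treating a Bayes decision rule as accompanied by a fixed (arbitrary) priority order over actions for breaking ties, verifying that the same flexibility is present on the $\critic$ side, or alternatively by perturbing $w$ slightly to eliminate ties and invoking continuity of $q \mapsto \Eb[(\aY-Y)\,\Ibb(\delta(\hp(X))=a)]$ in the defining weights. Everything else reduces to the linear-algebra bookkeeping in the two displays above.
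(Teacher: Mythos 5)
Your proposal is correct and follows essentially the same route as the paper's proof: fix a Bayes rule $\delta$, reduce the supremum over losses to $\sum_a \lVert \Eb[(\aY-Y)\Ibb(\delta(\hp(X))=a)]\rVert_2$ via one-hot encoding, normalization, and Cauchy--Schwarz, then identify $\{q,a\mapsto \Ibb(\delta_\ell(q)=a)\}$ with $\critic$ through $w_a = -\ell(\cdot,a)$ (the paper uses $\ell_a = -w_a/\lVert w_a\rVert_2$, the same correspondence up to positive scaling). Your explicit attention to the argmax tie-breaking is a detail the paper silently glosses over, but it does not change the argument.
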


In words, $\hp$ satisfies decision calibration if and only if there is no linear classification function that can partition $\Delta^C$ into $K$ parts, such that the average difference $\aY - Y$ (or equivalently $\hp(X) - p^*(X)$) in each partition is large. 
Theorem~\ref{thm:pac_informal}.1 follows naturally because $B^K$ has low Radamacher complexity, so the left hand side of Eq.(\ref{eq:decision_calibration_equivalent}) can be accurately estimated with polynomially many samples. For a formal statement and proof see Appendix~\ref{appendix:proof_pac}.

The remaining problem is computation. With unlimited compute, we can upper bound Eq.(\ref{eq:decision_calibration_equivalent}) by brute force search over $B^K$; in practice, we use a surrogate objective optimizable with gradient descent. This is the topic of Section~\ref{sec:computation}. 

\subsection{Recalibration Algorithm}
This section discusses the second part of Theorem~\ref{thm:pac_informal} where we design a post-processing recalibration algorithm.
The algorithm is based on the following intuition, inspired by \citep{hebert2018multicalibration}: given a predictor $\hp$ we find the worst $b \in B^K$ that violates Eq.(\ref{eq:decision_calibration_equivalent}) (line 3 of Algorithm~\ref{alg:recalibration_sketch}); then, we make an update to $\hp$ to minimize the violation of Eq.(\ref{eq:decision_calibration_equivalent}) for the worst $b$ (line 4,5 of Algorithm~\ref{alg:recalibration_sketch}). This process is be repeated until we get a $(B^K, \epsilon)$-decision calibrated prediction (line 2). The sketch of the algorithm is shown in Algorithm~\ref{alg:recalibration_sketch} and the detailed algorithm is in the Appendix. 


\RestyleAlgo{boxruled}
\LinesNumbered
\begin{algorithm}[H]
Input current prediction function $\hp$, tolerance $\epsilon$. Initialize $\hp^{(0)} = \hp$\;
\For{$t=1, 2, \cdots, T$ until output $\hp^{(T)}$ when it satisfies Eq.(\ref{eq:decision_calibration_equivalent})}{
    Find $b \in \critic$ that maximizes $\sum_{a=1}^K \left\lVert \Eb[ (Y - \hp^{(t-1)}(X)) b(\hp^{(t-1)}(X), a) \right\rVert$ \; 
    Compute the adjustments $d_a = \Eb[(Y - \hp^{(t-1)} (X)) b (\hp^{(t-1)}(X), a)] / \Eb[ b (\hp^{(t-1)}(X), a)]$ \;
    Set $\hat{p}^{(t)} : x \mapsto \hat{p}^{(t-1)}(x) + \sum_{a=1}^K b (\hp^{(t-1)}(x), a)\cdot d_a$ (projecting onto $[0,1]$ if necessary) \;
}
\caption{Recalibration algorithm to achieve $\Lc^K$ decision calibration.} 
\label{alg:recalibration_sketch} 
\end{algorithm}

Given a dataset with $N$ samples, the expectations in Algorithm~\ref{alg:recalibration_sketch} are replaced with empirical averages. The following theorem demonstrates that Algorithm~\ref{alg:recalibration_sketch} satisfies the conditions stated in Theorem~\ref{thm:pac_informal}. 




\begin{manualtheorem}{2.2}\label{thm:convergence}
Given any input $\hp$ and tolerance $\epsilon$, Algorithm~\ref{alg:recalibration_sketch} terminates in $O(K/\epsilon^2)$ iterations. For any $\lambda > 0$, given $O(\text{poly}(K, C, \log(1/\delta), \lambda))$ samples, with $1-\delta$ probability Algorithm~\ref{alg:recalibration_sketch} outputs a $(\Lc^K, \epsilon+\lambda)$-decision calibrated prediction function $\hp'$ that satisfies $\Eb[\lVert \hp'(X) - Y \rVert_2^2] \leq \Eb[\lVert \hp(X) - Y \rVert_2^2] + \lambda$. 
\end{manualtheorem} 
The theorem demonstrates that if we can solve the inner search problem over $B^K$, then we can obtain decision calibrated predictions efficiently in samples and computation.

\subsection{Relaxation of Decision Calibration for Computational Efficiency} 
\label{sec:computation}

We complete the discussion by addressing the open computational question.
Directly optimizing over $\critic$ is difficult, so we instead define the softmax relaxation. 
\begin{align*}
    \criticd = \left\lbrace \bar{b}_w \mid \forall w \in \Rbb^{K \times C} \right\rbrace&&
    \textrm{where~~}\bar{b}_w(q,a) = \frac{e^{\langle q, w_a\rangle}}{\sum_{a'} e^{\langle q, w_{a'} \rangle}}
\end{align*}
The key motivation behind this relaxation is that $\bar{b}_w \in \criticd$ is now differentiable in $w$, so we can optimize over $\criticd$ using gradient descent.
Correspondingly some technical details in Algorithm~\ref{alg:recalibration_sketch} change to accommodate soft partitions; we address these modifications in Appendix~\ref{appendix:relax} and show that after these modifications Theorem~\ref{thm:convergence} still holds. 
Intuitively, the main reason that we can still achieve decision calibration with softmax relaxation is because $\critic$ is a subset of the closure of $\criticd$. 
Therefore, compared to Eq.(\ref{eq:decision_calibration_equivalent}), we enforce a slightly stronger condition with the softmax relaxation. This can be formalized in the following proposition.  







\begin{restatable}{prop}{propreduction}
\label{prop:reduction_to_decision}
A prediction function $\hp$ is $(\Lc^K, \epsilon)$-decision calibrated if it satisfies 
\begin{align}
     \sup_{\bar{b} \in \criticd}\sum_{a=1}^K \left\lVert \Eb [ (\aY - Y) \bar{b}(\hp(X), a) ] \right\rVert_2 \leq \epsilon \label{eq:decision_calibration_equivalent2}:
\end{align}
\end{restatable}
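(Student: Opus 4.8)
The plan is to show that the softmax condition in Eq.(\ref{eq:decision_calibration_equivalent2}) implies the hard-max condition in Eq.(\ref{eq:decision_calibration_equivalent}), and then invoke Proposition~\ref{prop:decision_equivalence} to conclude $(\Lc^K,\epsilon)$-decision calibration. The whole argument reduces to a single topological claim: the hard-max family $\critic$ is contained in the closure of the softmax family $\criticd$, in an appropriate pointwise sense, together with the observation that the quantity $\sum_{a=1}^K \lVert \Eb[(\aY-Y)\,b(\hp(X),a)]\rVert_2$ is continuous as a function of $b$ under that limit.

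First I would fix an arbitrary $b_w \in \critic$ and consider the scaled softmax classifiers $\bar b_{tw}$ for $t \to \infty$. For any $q \in \Delta^C$ at which the maximizer $\arg\sup_{a'}\langle q, w_{a'}\rangle$ is unique, $\bar b_{tw}(q,a) \to b_w(q,a)$ as $t\to\infty$; the set of $q$ where the maximizer is non-unique is a finite union of lower-dimensional hyperplane sections of $\Delta^C$ (for each pair $a\neq a'$, the set $\{q : \langle q, w_a - w_{a'}\rangle = 0\}$), hence has Lebesgue measure zero on $\Delta^C$. Since $\hp(X)$ is a random variable taking values in $\Delta^C$, I would need the mild genericity observation that one can perturb $w$ infinitesimally so that $\Pr[\hp(X)$ lies on a tie hyperplane$] = 0$; alternatively, and more cleanly, I would note that ties can be broken by the same fixed lexicographic rule used implicitly in the $\arg\sup$ of Eq.(\ref{eq:bk_def}), and that this does not affect the conclusion because the contribution of any measure-zero (under the law of $\hp(X)$) set of ties to the expectation is irrelevant only if that set truly has measure zero — if $\hp(X)$ has atoms sitting exactly on a tie hyperplane this needs a dedicated $w\mapsto w+\eta$ perturbation argument. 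So concretely: perturb $w$ to a nearby $w'$ with no atoms of $\hp(X)$ on its tie hyperplanes, apply the pointwise-a.e. convergence $\bar b_{tw'}(\hp(X),a) \to b_{w'}(\hp(X),a)$, and pass to the limit.

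Next I would pass the limit through the expectation and the norm. The integrands $(\aY - Y)\,\bar b_{tw'}(\hp(X),a)$ are uniformly bounded (since $\aY, Y \in \Delta^C$ are bounded and $\bar b \in [0,1]$), so dominated convergence gives $\Eb[(\aY - Y)\,\bar b_{tw'}(\hp(X),a)] \to \Eb[(\aY - Y)\,b_{w'}(\hp(X),a)]$ for each $a$; continuity of $\lVert\cdot\rVert_2$ and of the finite sum over $a\in[K]$ then yields
\begin{align*}
\sum_{a=1}^K \left\lVert \Eb[(\aY - Y)\,b_{w'}(\hp(X),a)]\right\rVert_2 = \lim_{t\to\infty}\sum_{a=1}^K \left\lVert \Eb[(\aY - Y)\,\bar b_{tw'}(\hp(X),a)]\right\rVert_2 \le \epsilon,
\end{align*}
where the inequality is the hypothesis Eq.(\ref{eq:decision_calibration_equivalent2}) applied to each $\bar b_{tw'} \in \criticd$. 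Finally, letting $w' \to w$ and using continuity in $w'$ of the left-hand side (again dominated convergence, since $b_{w'}(\hp(X),a) \to b_w(\hp(X),a)$ a.e. as $w'\to w$, using that $\hp(X)$ has no atoms on $w$'s tie hyperplanes after a second harmless perturbation, or simply choosing the perturbation sequence $w'\to w$ to avoid atoms throughout) gives the bound for $b_w$ itself. Taking the supremum over $b_w \in \critic$ establishes Eq.(\ref{eq:decision_calibration_equivalent}), and Proposition~\ref{prop:decision_equivalence} finishes the proof.

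The main obstacle is the handling of atoms of the distribution of $\hp(X)$ that might lie exactly on the tie hyperplanes $\{q:\langle q, w_a - w_{a'}\rangle = 0\}$: pointwise convergence $\bar b_{tw}\to b_w$ fails precisely there, and if those sets carry positive probability the limit argument breaks. The clean fix is a density/genericity argument — the set of $w$ whose tie hyperplanes support an atom of $\hp(X)$ is a countable union of measure-zero sets in $\Rbb^{K\times C}$, so one can always choose an approximating sequence $w' \to w$ avoiding it — but writing this carefully is the only non-routine part; everything else is dominated convergence and continuity of norms. (An alternative that sidesteps atoms entirely is to work with the empirical distribution, where $\hp(X)$ takes finitely many values and one just picks $w'$ avoiding that finite set, but for the population statement the genericity argument is cleanest.)
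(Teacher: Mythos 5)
Your argument is correct and is essentially the paper's own (largely unwritten) justification: the paper only remarks that $\critic$ lies in the closure of $\criticd$, so that Eq.~(\ref{eq:decision_calibration_equivalent2}) enforces a stronger condition than Eq.~(\ref{eq:decision_calibration_equivalent}), and then Proposition~\ref{prop:decision_equivalence} gives decision calibration — your scaled-softmax limit, dominated convergence, and genericity perturbation are exactly the details behind that remark. One small refinement: the obstruction on tie hyperplanes is positive $\hp(X)$-mass, not only atoms (the law of $\hp(X)$ can charge a hyperplane without atoms), but the same genericity argument you sketch (a.e.\ $w$ gives zero-probability tie sets, by Fubini, and the approach direction can be chosen to reproduce the fixed tie-breaking rule) closes this without change.
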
 

We remark that unlike Proposition~\ref{prop:decision_equivalence} (which is an ``if and only if'' statement), Proposition~\ref{prop:reduction_to_decision} is a ``if'' statement. This is because the closure of $\bar{B}^K$ is superset of $B^K$, so Eq.(\ref{eq:decision_calibration_equivalent2}) implies Eq.(\ref{eq:decision_calibration_equivalent}) but not vice versa.  





\section{Empirical Evaluation} 

\blfootnote{Decision calibration is implemented in the torchuq package at https://github.com/ShengjiaZhao/torchuq. Code to reproduce these experiments can be found in following directory of the torchuq package ``torchuq/applications/decision\_calibration''} 

\subsection{Skin Legion Classification}
\begin{figure}
    \centering
    \vspace{-10mm} 
    \includegraphics[width=1.0\linewidth]{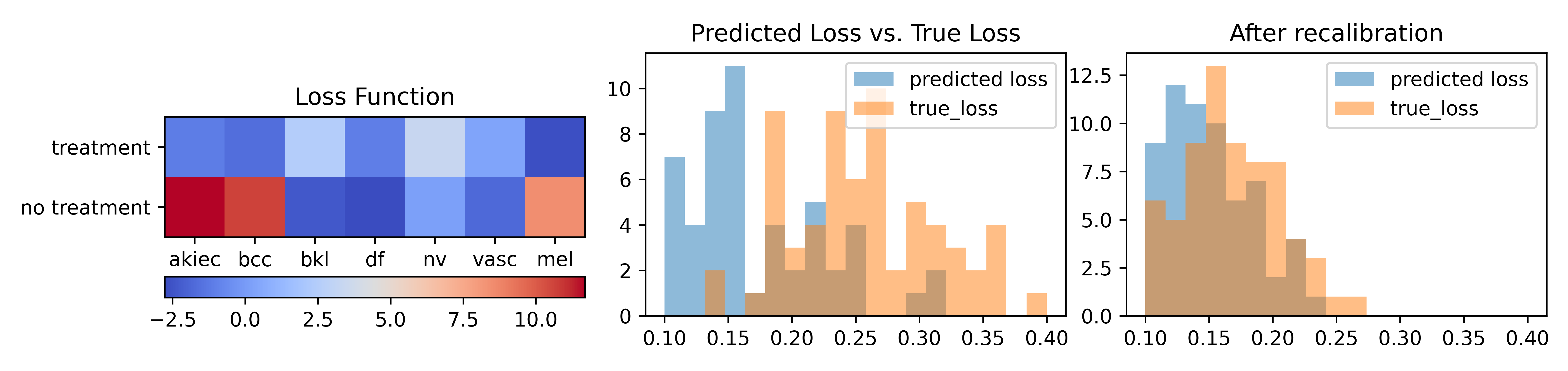}
    \vspace{-3mm}
    \caption{An illustrative example of accurate loss estimation on the HAM10000 dataset. 
    The predictor predicts the probability of 7 skin illness categories (akiec, bcc, ...., mel); the hospital decides between two actions (treatment vs. no treatment). \textbf{Left} An example loss function (details not important). Blue indicates low loss and red indicates high loss. For the malignant conditions such as 'akiec' and 'mel', no treatment has high loss (red); for the benign conditions such as 'nv', (unnecessary) treatment has moderate loss. 
    \textbf{Middle} Histogram of predicted loss vs. the true loss. The predicted loss is the loss the hospital expected to incur assuming the predicted probabilities are correct under the Bayes decision rule.
    The true loss is the loss the hospital actually incurs (after ground-truth outcomes are revealed). 
    We plot the histogram from random train/test splits of the dataset, and observe that the true loss is generally much greater than the predicted loss. Because the hospital might incur loss it didn't expect or prepare for, the hospital cannot trust the prediction function. 
    \textbf{Right} After applying our recalibration algorithm, the true loss almost matches the predicted loss. 
    }
    \label{fig:ham}
\end{figure}

\begin{figure}
    \centering
    \includegraphics[width=0.75\linewidth]{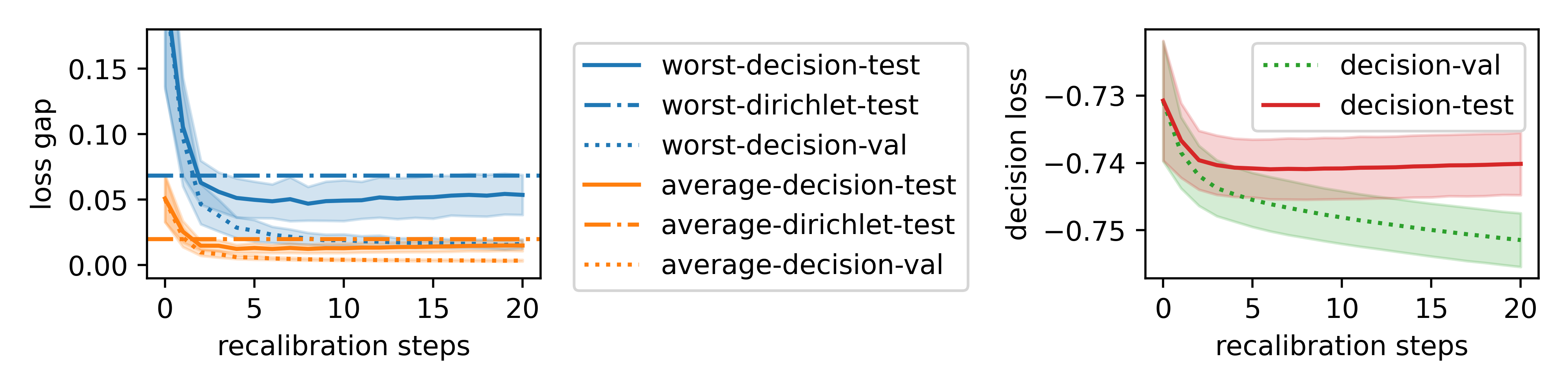}
    \vspace{-3mm} 
    \caption{Calibration improves decision loss and its estimation on the HAM10000 skin legion classification dataset. \textbf{Left} The gap between the predicted decision loss and the true decision loss in Eq.(\ref{eq:loss_gap_metric}) on a set of randomly sampled loss functions. We plot both the average gap (orange) and the worst gap (blue) out of all the loss functions. The dotted lines are the validation set performance, and solid lines are the test set performance. 
    With more recalibration steps in Algorithm~\ref{alg:recalibration_sketch}, both the average gap and the worst gap improves. The improvements are greater than Dirichlet recalibration (dashed line) \textbf{Right} The decision loss (averaged across all random loss functions). With more recalibration steps the decision loss also improves slightly.}
    \label{fig:ham_main_result}
\end{figure}

This experiment materializes our motivating example in the introduction. We aim to show on a real medical prediction dataset, our recalibration algorithm improves both the decision loss and reduces the decision loss estimation error. For the estimation error, as in Definition~\ref{def:decision_calibration_relaxed} for any loss function $\ell$ and corresponding Bayes decision rule $\delta_\ell$ we measure
\begin{align}
    \text{loss gap} :=  \left\lvert \Eb [\ell(\aY, \delta_\ell(\hp(X))) ] - \Eb [\ell(Y, \delta_\ell(\hp(X))) ] \right\rvert / \sup_{a \in \Ac} \lVert \ell(\cdot, a) \rVert_2 \label{eq:loss_gap_metric}
\end{align}
In addition to the loss function in Figure~\ref{fig:ham} (which is motivated by medical domain knowledge), we also consider a set of 500 random loss functions where for each $y\in\Yc, a\in\Ac$, $\ell(y, a) \sim \text{Normal}(0, 1)$, and report both the average loss gap and the maximum loss gap across the loss functions. 

\paragraph{Setup} We use the HAM10000 dataset~\citep{tschandl2018ham10000}. We partition the dataset into train/validation/test sets, where approximately 15\% of the data are used for validation, while 10\% are used for the test set. We use the train set to learn the baseline classifier $\hp$, validation set to recalibrate, and the test set to measure final performance. 
For modeling we use the densenet-121 architecture~\citep{huang2017densely}, which achieves around 90\% accuracy. 

\paragraph{Methods} For our method we use Algorithm~\ref{alg:recalibration} in Appendix~\ref{appendix:relax} (which is a small extension of Algorithm~\ref{alg:recalibration_sketch} explained in Section~\ref{sec:computation}).  We compare with temperature scaling~\citep{guo2017calibration} and Dirichlet calibration~\citep{kull2019beyond}.
We observe that all recalibration methods (including ours) work better if we first apply temperature scaling, hence we first apply it in all experiments. 
For example, in Figure~\ref{fig:ham_main_result} temperature scaling corresponds to $0$ decision recalibration steps. 



\paragraph{Results} The results are shown in Figure~\ref{fig:ham_main_result}. For these experiments we set the number of actions $K=3$. For other choices we obtain qualitatively similar results in Appendix~\ref{appendix:experiment}. 
The main observation is that decision recalibration improves the loss gap in Eq.(\ref{eq:loss_gap_metric}) and slightly improves the decision loss. Our recalibration algorithm converges rather quickly (in about 5 steps). We also observe that our recalibration algorithm slightly improves top-1 accuracy (the average improvement is $0.40 \pm 0.08$\%) and L2 loss (the average decrease is $0.010 \pm 0.001$) on the test set.

\begin{figure}
    \centering
    \includegraphics[width=1.0\linewidth]{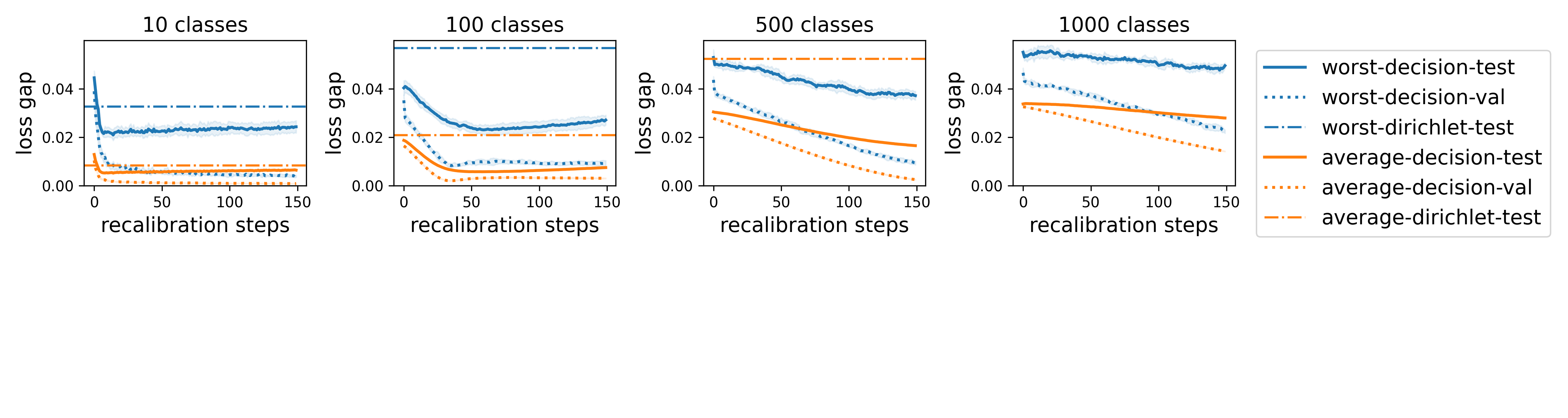}
    \caption{Calibration improves decision loss estimation on Imagenet-1000. The meaning of the plot is identical to Figure~\ref{fig:ham_main_result}. From left to right we vary the number of classification categories $C$ from 10 to 1000. Our algorithm reduces the gap between predicted loss and true loss in Eq.(\ref{eq:loss_gap_metric}) even up to 1000 classes, although more classes require more iterations and are more prone to over-fitting. Dirichlet calibration is less scalable than our method. For example, with 1000 classes the loss gap of Dirichlet calibration is off the chart.  }
    \label{fig:imagenet_inception}
    \vspace{-3mm}
\end{figure}

\subsection{Imagenet Classification} 

We stress test our algorithm on Imagenet. The aim is to show that even with deeply optimized classifiers (such as inception-v3 and resnet) that are tailor made for the Imagenet benchmark and a large number of classes (1000 classes), our recalibration algorithm can improve the loss gap in Eq.(\ref{eq:loss_gap_metric}). 

\paragraph{Setup} The setup and baselines are identical to the HAM10000 experiment with two differences: we use pretrained models provided by pytorch, and among the 50000 standard validation samples, we use 40000 samples for recalibration and 10000 samples for testing. Similar to the previous experiment, we randomly generate a set of 500 loss functions from normal distributions. 

\paragraph{Results} The results are shown in Figure~\ref{fig:imagenet_inception} with additional plots in Appendix~\ref{appendix:experiment}.  Decision calibration can generalize to a larger number of classes, and still provides some (albeit smaller) benefits with 1000 classes. Recalibration does not hurt accuracy and L2 error, as we observe that both improve by a modest amount (on average by $0.30$\% and $0.00173$ respectively). 
We contrast decision calibration with Dirichlet calibration~\citep{kull2019beyond}. Dirichlet calibration 
also reduces the loss gap when the number of classes is small (e.g. 10 classes), but is less scaleble than decision recalibration. With more classes its performance degrades much more than decision calibration. 

\section{Related Work} 

\paragraph{Calibration} 
Early calibration research focus on binary classification~\citep{brier1950verification,dawid1984present}. 
For multiclass classification, 
the strongest definition is distribution (strong) calibration~\citep{kull2015novel,song2019distribution} but is hindered by sample complexity. Weaker notions such as confidence (weak) calibration~\citep{platt1999probabilistic,guo2017calibration}, class-wise calibration~\citep{kull2019beyond} or average calibration~average calibration~\citep{kuleshov2018accurate} are more commonly used in practice. To unify these notions, \citep{widmann2019calibration} proposes $\Fc$-calibration but lacks detailed guidance on which notions to use. 

\paragraph{Individual calibration} Our paper discusses the \textit{average} decision loss over the population $X$. A stronger requirement is to guarantee the loss for each individual decision. 
Usually individual guarantees are near-impossible~\citep{barber2019limits} and are only achievable with hedging~\citep{zhao2021right} or randomization~\citep{zhao2020individual}. 

\paragraph{Multi-calibration and Outcome Indistinguishability.}
Calibration have been the focus of many works on fairness, starting with \citep{kleinberg2016inherent,pleiss2017fairness}.
Multi-calibration has emerged as a noteworthy notion of fairness~\citep{hebert2018multicalibration, kim2019multiaccuracy,dwork2019learning,shabat2020sample,jung2020moment,dwork2021outcome} because it goes beyond ``protected'' groups, and guarantees calibration for any group that is identifiable within some computational bound.
Recently, \citep{dwork2021outcome} generalizes multicalibration to outcome 
indistinguishability (OI). 
Decision calibration is a special form of OI.

\section{Conclusion and Discussion} 

Notions of calibration are important to decision makers.
Specifically, we argue that many of the benefits of calibration can be summarized in two key properties---no regret decision making and accurate loss estimation.
Our results demonstrate that it is possible to achieve these desiderata for realistic decision makers choosing between a bounded number of actions, through the framework of decision calibration.

Our results should not be interpreted as guarantees about the quality of individual predictions.
For example, the medical provider cannot guarantee to each patient that their expected loss is low. Instead the guarantee should be interpreted as from the machine learning provider to the medical provider (who treats a group of patients).
Misuse of our results can lead to unjustified claims about the trustworthiness of a prediction.
Still, a clear benefit of the framework is that---due to accurate loss estimation---the overall quality of the predictor can be evaluated using unlabeled data.

One natural way to strengthen the guarantees of decision calibration would be extend the results to give decision ``multi-calibration,'' as in \cite{hebert2018multicalibration,dwork2021outcome}.
Multi-calibration guarantees that predictions are calibrated, not just overall, but also when restricting our attention to structured (identifiable) subpopulations.
In this paper, we consider loss functions that only depend on $x$ through $\hp(x)$; extending decision calibration to a multi-group notion would correspond to including loss functions that can depend directly on the input features in $x$.
However, when the input features are complex and high-dimensional (e.g. medical images), strong results (such as the feasibility of achieving $\Lc^K$-decision calibration) become much more difficult.
As in \cite{hebert2018multicalibration}, some parametric assumptions on how the loss can depend on the input feature $x$ would be necessary.

\section{Acknowledgements}

This research was supported in part by AFOSR (FA9550-19-1-0024), NSF (\#1651565, \#1522054, \#1733686), ONR, and FLI. SZ is supported in part by a JP Morgan fellowship and a Qualcomm innovation fellowship. MPK is supported by the Miller Institute for Basic Research in Science and the Simons Collaboration on the Theory of Algorithmic Fairness. TM acknowledges support of Google Faculty Award and NSF IIS 2045685. 



\bibliographystyle{iclr2021_conference}
\bibliography{ref}

\newpage 

\newpage
\appendix

\section{Relaxations to Decision Calibration} 
\label{appendix:relax}







We define the algorithm that corresponds to Algorithm~\ref{alg:recalibration_sketch} but for softmax relaxed functions. Before defining our algorithm at each iteration $t$ we first lighten our notation with a shorthand $b_a(X) = b(\hp^{(t-1)}(X), a)$ (at different iteration $t$, $b_a$ denotes different functions), and $b(X)$ is the vector of $(b_1(X), \cdots, b_K(X))$.

\RestyleAlgo{boxruled}
\LinesNumbered
\begin{algorithm}[H]
Input current prediction function $\hp$, a dataset $\Dc = \lbrace (x_1, y_1), \cdots, (x_M, y_M) \rbrace $\, tolerance $\epsilon$  \;
Initialize $\hp^{(0)} = \hp$, $v^{(0)} = +\infty$\;
\For{$t=1, 2, \cdots$ until $v^{(t-1)} < \epsilon^2/K$}{
    $v^{(t)}, b^{(t)} = \sup, \arg\sup_{b \in \criticd} \sum_{a=1}^K \left\lVert \Eb [(Y - \hp^{(t-1)}(X)) b_a(X)] \right\rVert^2$ \;
    Compute $D \in \Rbb^{K \times K}$ where $D_{aa'} = \Eb[b_a^{(t)}(X) b_{a'}^{(t)}(X)]$ \;
    Compute $R \in \Rbb^{K \times C}$ where $R_a = \Eb[(Y - \hp^{(t-1)}(X)) b^{(t)}_a(X)]$ \;
    Set $\hp^{(t)}: x \mapsto \pi(\hp^{(t-1)}(x) + R^T D^{-1} b^{(t)}(x))$ where $\pi$ is the normalization projection\;
}
Output $\hp^{(T)}$ where $T$ is the number of iterations 
\caption{Recalibration Algorithm to achieve decision calibration.} 
\label{alg:recalibration} 
\end{algorithm}

For the intuition of the algorithm, consider the $t$-th iteration where the current prediction function is $\hp^{(t-1)}$. On line 4 we find the worst case $b^{(t)}$ that maximizes the ``error'' $\sum_{a=1}^K \left\lVert \Eb[(Y - \hp^{t-1}(X)) b^{(t)}_a(X)] \right\rVert^2 $, and on line 5-7 we make the adjustment $\hp^{(t-1)} \to \hp^{(t)}$ to minimize this error  $\sum_{a=1}^K \left\lVert \Eb[(Y - \hp^{t}(X)) b^{(t)}_a(X)] \right\rVert^2$.  
In particular, the adjustment we aim to find on line 5-7 (which we denote by $U \in \Rbb^{C \times K}$) should satisfy the following: if we let 
\begin{align*} 
\hp^{(t)}(X) = \hp^{(t-1)}(X) + U  b^{(t)}(X)
\end{align*}
we can minimize 
\begin{align*}
    L(U) &:= \sum_{a=1}^K \left\lVert \Eb[(Y - \hp^{(t)}(X)) b^{(t)}_a(X)] \right\rVert^2 
\end{align*}
 We make some simple algebra manipulations on $L$ to get
\begin{align*}
    L(U) &=  \sum_{a=1}^K \left\lVert \Eb[(Y - \hp^{(t-1)}(X)) b^{(t)}_a(X) - U b^{(t)}(X) b^{(t)}_a(X) ] \right\rVert^2    \\
    &= \sum_{a=1}^K \left\lVert  R_a - (D U^T)_a \right\rVert^2 
    =  \left\lVert R - DU^T   \right\rVert^2  
\end{align*}
Suppose $D$ is invertible, then the optimum of the objective is 
\begin{align*}
    U^* := \arg\inf L(U) = R^T D^{-1}, \quad L(U^*) = 0
\end{align*}
When $D$ is not invertible we can use the pseudo-inverse, though we observe in the experiments that $D$ is always invertible.   

For the relaxed algorithm we also have a theorem that is equivalent to Theorem~\ref{thm:convergence}. The statement of the theorem is identical; the proof is also essentially the same except for the use of some new technical tools. 

\begin{manualtheorem}{2.2'} \label{thm:convergence_relaxed}
Algorithm~\ref{alg:recalibration} terminates in $O(K/\epsilon^2)$ iterations. For any $\lambda > 0$, given $O(\text{poly}(K, C, \log(1/\delta), \lambda))$ samples, with $1-\delta$ probability Algorithm~\ref{alg:recalibration_sketch} outputs a $(\Lc^K, \epsilon + \lambda)$-decision calibrated prediction function $\hp'$ that satisfies $\Eb[\lVert \hp'(X) - Y \rVert_2^2] \leq \Eb[\lVert \hp(X) - Y \rVert_2^2] + \lambda$. 
\end{manualtheorem}

\section{Additional Experiment Details and Results} 
\label{appendix:experiment} 

Additional experiments are shown in Figure~\ref{fig:ham_additional} and Figure~\ref{fig:imagenet_additional}. The observations are similar to those in the main paper.  

\begin{figure}
    \centering
    \begin{tabular}{c}
    \includegraphics[width=0.9\linewidth]{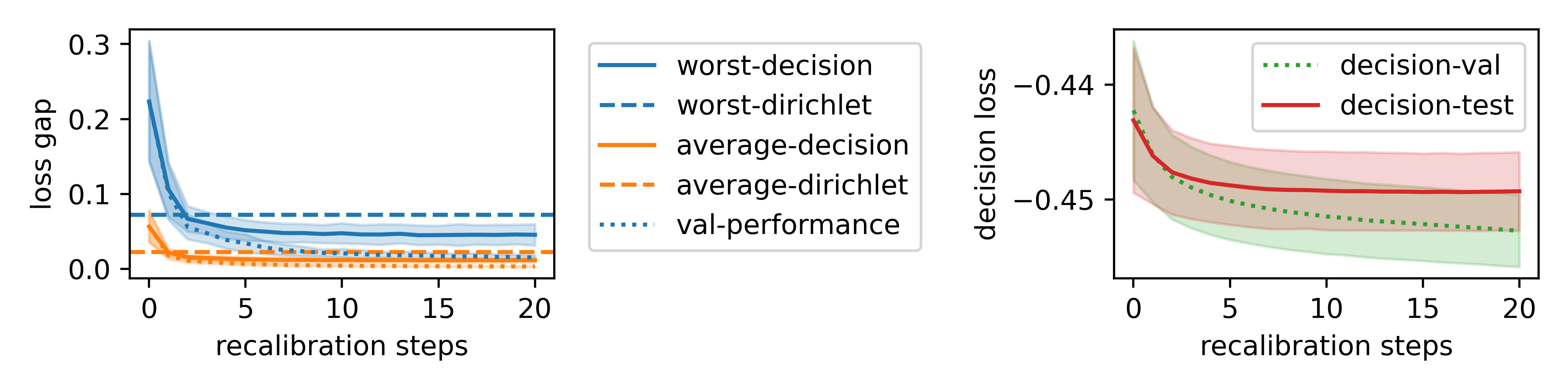} \\
    \includegraphics[width=0.9\linewidth]{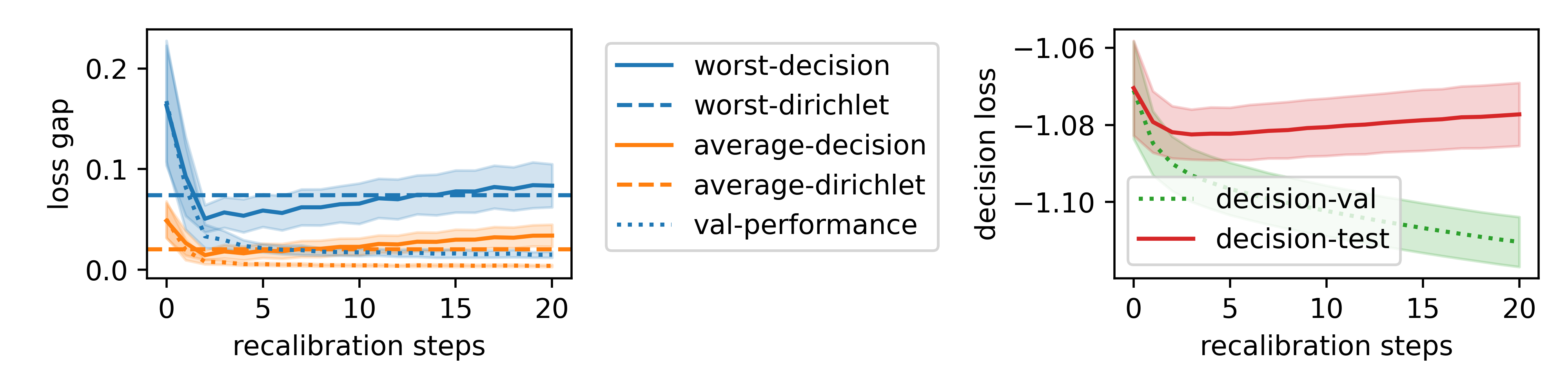}
    \end{tabular}
    \caption{Additional Results on the HAM10000 for 2 and 5 actions. The observations are similar to Figure~\ref{fig:ham_main_result} even though overfitting happens sooner with 5 actions} 
    \label{fig:ham_additional}
\end{figure}

\begin{figure}
    \centering
    \includegraphics[width=1.0\linewidth]{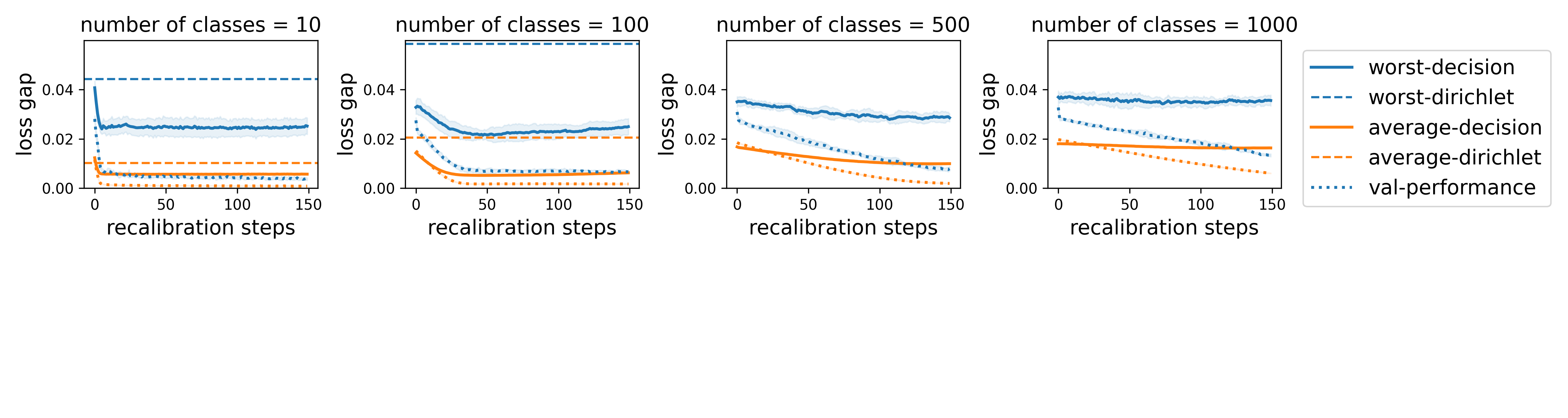}
    \caption{Additional results on the resnet18. The observations are similar to Figure~\ref{fig:imagenet_inception}: decision recalibration improves the loss gap.}
    \label{fig:imagenet_additional}
\end{figure}

\section{Proofs}
\label{appendix:proof}


\subsection{Equivalence between Decision Calibration and Existing Notions of Calibration} 

\thmequivalence*

\begin{proof}[Proof of Theorem~\ref{thm:equivalence}, part 1]
Before the proof we first need a technical Lemma 
\begin{lemma}\label{lemma:conditional_exp} 
For any pair of random variables $U, V$, $\Eb[U \mid V] = 0$ almost surely if and only if $\forall c \in \mathbb{R}, \Eb[U \mathbb{I}(V > c)] = 0$. 
\end{lemma}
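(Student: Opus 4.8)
The plan is to prove the two implications separately; the forward direction is immediate from the tower property, while the reverse direction is a standard $\pi$--$\lambda$ (Dynkin) argument. Throughout I would tacitly use that $U$ is integrable, so that $\Eb[U\mid V]$ and all the relevant expectations are well defined.

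For the ($\Rightarrow$) direction, assume $\Eb[U\mid V]=0$ almost surely. Then for any bounded measurable $g$, the tower property gives $\Eb[U\, g(V)] = \Eb[\,g(V)\,\Eb[U\mid V]\,]=0$; specializing to $g(V)=\mathbb{I}(V>c)$ yields $\Eb[U\,\mathbb{I}(V>c)]=0$ for every $c$.

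For the ($\Leftarrow$) direction, set $h(V):=\Eb[U\mid V]$, which is an integrable function of $V$, and consider the finite signed measure $\mu(A):=\Eb[\,h(V)\,\mathbb{I}(V\in A)\,]=\Eb[\,U\,\mathbb{I}(V\in A)\,]$ on $\mathcal{B}(\Rbb)$. The hypothesis says $\mu((c,\infty))=0$ for every $c\in\Rbb$, and letting $c\to-\infty$ (dominated convergence) also gives $\mu(\Rbb)=0$. I would then check that $\mathcal{D}:=\{A\in\mathcal{B}(\Rbb):\mu(A)=0\}$ is a $\lambda$-system: it contains $\Rbb$; it is closed under complements since $\mu(A^c)=\mu(\Rbb)-\mu(A)$; and it is closed under countable disjoint unions by countable additivity of signed measures. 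Since $\mathcal{D}$ contains the $\pi$-system $\{(c,\infty):c\in\Rbb\}\cup\{\Rbb\}$, which generates $\mathcal{B}(\Rbb)$, Dynkin's $\pi$--$\lambda$ theorem gives $\mathcal{D}=\mathcal{B}(\Rbb)$, i.e. $\mu\equiv 0$.

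To conclude, apply $\mu\equiv 0$ to the Borel sets $A^+=\{v:h(v)>0\}$ and $A^-=\{v:h(v)<0\}$ (each is $V$-measurable): this gives $\Eb[\,h(V)\,\mathbb{I}(h(V)>0)\,]=0$ and $\Eb[\,h(V)\,\mathbb{I}(h(V)<0)\,]=0$, and since each integrand has a fixed sign, both events are null, so $h(V)=0$ a.s., which is exactly $\Eb[U\mid V]=0$ a.s. The only mildly delicate point — the main obstacle — is the passage from half-lines to arbitrary Borel sets, i.e. verifying that the $\pi$--$\lambda$ machinery applies; everything else is bookkeeping. An alternative that avoids Dynkin's theorem is to observe that $t\mapsto\Eb[U\,\mathbb{I}(V\le t)]$ is identically zero and of bounded variation, hence induces the zero Lebesgue--Stieltjes signed measure, yielding the same conclusion.
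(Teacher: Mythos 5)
Your proof is correct, and it is in fact more complete than the paper's own argument. The paper's proof invokes the orthogonality (defining) property of conditional expectation for events in $\sigma(V)$ and specializes to the event $\{V>c\}$, which yields exactly the identity $\Eb[U\,\Ibb(V>c)]=\Eb[\Eb[U\mid V]\,\Ibb(V>c)]$; this immediately gives the forward implication, but the paper stops there and leaves the reverse implication (that vanishing of all half-line expectations forces $\Eb[U\mid V]=0$ a.s.) implicit, even though that is the direction actually needed in the proof of Theorem 1. You prove that direction in full: writing $h(V)=\Eb[U\mid V]$, passing from the generating $\pi$-system of half-lines $\{(c,\infty)\}$ to all Borel sets via Dynkin's $\pi$--$\lambda$ theorem (or equivalently the Lebesgue--Stieltjes argument you sketch), and then testing against $\{h(V)>0\}$ and $\{h(V)<0\}$ to conclude $h(V)=0$ a.s. This is the standard way to close the gap, and your integrability caveat on $U$ is the right hypothesis to make explicit; in the paper's application $U$ is bounded, so it is harmless. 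The only substantive difference, then, is that you supply the measure-theoretic extension step that the paper takes for granted, while both arguments share the same starting identity.
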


\textbf{Part 1} When the loss function is $\ell: y, a \mapsto \mathbb{I}(y \neq a \cap a \neq \bot) + \beta \mathbb{I}(a = \bot)$, the Bayes decision is given by
\begin{align*}
    \delta_\ell(x) = \left\lbrace \begin{array}{ll} \arg\max \hp(x)  & \max \hp(x) > 1-\beta \\ \bot & \text{otherwise} \end{array}\right.
\end{align*}
Denote $U= \max \hp(X)$ and $V = \arg\max \hp(X)$. 
For any pair of loss functions $\ell$ and $\ell'$ parameterized by $\beta$ and $\beta'$ we have
\begin{align*}
    &\Eb[\ell'(Y, \delta_{\ell}(X))] - \Eb[\ell'(\aY, \delta_{\ell}(X))] \\
    &= \Eb[(\ell'(Y, \bot) - \ell'(\aY, \bot))  \mathbb{I}(\delta_{\ell}(X) = \bot)] + \Eb[(\ell'(Y, \delta_{\ell}(X)) - \ell'(\aY, \delta_{\ell}(X))) \mathbb{I}( \delta_{\ell}(X) \neq \bot)] & \text{Tower}  \\
    &= 0 + \Eb[(p^*_{V}(X) - \hp_{V}(X)) \mathbb{I}( \max \hp(x) > 1-\beta )] & \text{Def of } \ell \\
    &= \Eb[(p^*_{V}(X) - U) \mathbb{I}(U > 1-\beta)]
\end{align*}
Suppose $\hp$ is confidence calibrated, then almost surely
\begin{align*}
    U = \Pr[Y = \arg\max \hp(X) \mid U] = \Eb[p^*_{V}(X) \mid U] 
\end{align*}
which implies almost surely
$
     \Eb[p^*_{V}(X) - U \mid U] = 0
$. 
By Lemma~\ref{lemma:conditional_exp} we can conclude that 
\begin{align*}
    0 = \Eb[(p^*_{V}(X) - U) \mathbb{I}(U > 1-\beta)] = \Eb[\ell'(Y, \delta_\ell(X))] - \Eb[\ell'(\aY, \delta_\ell(X))] 
\end{align*}
so $\hp$ is $L_r$-weakly calibrated. 

Conversely suppose $\hp$ is $L_r$ weakly calibrated, then $\forall \beta \in [0, 1]$,
$
\Eb[(p^*_{V}(X) - U) \mathbb{I}(U > 1-\beta)] = 0
$. By Lemma~\ref{lemma:conditional_exp} we can conclude that almost surely
\begin{align*}
    0 = \Eb[p^*_{V}(X) - U \mid U] = \Eb[p^*_{V}(X) \mid U]  - U
\end{align*}
so $\hp$ is confidence calibrated.

\textbf{Part 2} For any loss function $\ell: y, a \mapsto \Ibb(a = \bot) + \beta_1 \Ibb(y \neq c \wedge a=T) + \beta_2 \Ibb(y = c \wedge a=F)$ where $\beta_1, \beta_2 > 1$, observe that the Bayes decision for loss function $\ell$ is 
\begin{align*}
    \delta_\ell(x) = \left\lbrace \begin{array}{ll} T & \hp_c(x) > \max(1 - 1/\beta_1, \beta_1 / (\beta_1 + \beta_2)) \\ F & \hp_c(x) < \min(1/\beta_2, \beta_1 / (\beta_1 + \beta_2))  \\ \bot & \text{otherwise} \end{array}\right.
\end{align*} 
Choose any pair of numbers $\alpha \geq \gamma$, we can choose $\beta_1, \beta_2$ such that $\alpha := \max(1 - 1/\beta_1, \beta_1 / (\beta_1 + \beta_2)), \gamma := \min(1/\beta_2, \beta_1 / (\beta_1 + \beta_2))$.  
For any pair of loss functions $\ell$ and $\ell'$ parameterized by $\beta_1, \beta_2, \beta_1', \beta_2'$ (with associated threshold $\alpha \geq \gamma, \alpha' \geq \gamma'$) we have
\begin{align*}
& \Eb[\ell'(Y, \delta_{\ell}(X)) - \Eb[\ell'(\aY, \delta_{\ell}(X))] \\
&= \Eb[(\ell'(Y, \delta_{\ell}(X))  - \ell'(\aY, \delta_{\ell}(X))) \Ibb(\delta_{\ell}(X) = T)] +  \Eb[(\ell'(Y, \delta_{\ell}(X)) - \ell'(\aY, \delta_{\ell}(X))) \Ibb(\delta_{\ell}(X) = F)] \\
&= \Eb[(\ell'(Y, T)  - \ell'(\aY, T)) \Ibb(\delta_{\ell}(X) = T)] +  \Eb[(\ell'(Y, F) - \ell'(\aY, F)) \Ibb(\delta_{\ell}(X) = F)] \\
&= \beta_1' \Eb[ ( \hp_c(X) - p_c^*(X)) \Ibb(\hp_c(X) > \alpha) ] + \beta_2' \Eb[ (p_c^*(X) - \hp_c(X)) \Ibb(\hp_c(X) < \gamma) ]
\end{align*}
Similar to the argument for part 1, suppose $\hp$ is classwise calibrated then $\forall \alpha, \gamma$, $\Eb[ (p_c^*(X) - \hp_c(X)) \Ibb(\hp_c(X) > \alpha) ] = 0$ and $\Eb[ (p_c^*(X) - \hp_c(X)) \Ibb(\hp_c(X) < \gamma) ] = 0$; therefore it is $\Lc_{cr}$-decision calibrated.  

Conversely suppose $\hp$ is  $\Lc_{cr}$-decision calibrated, then $\forall \alpha$ we have $\Eb[ ( \hp_c(X) - p_c^*(X)) \Ibb(\hp_c(X) > \alpha) ] = 0$, which implies that $\hp$ is classwise calibrated according to Lemma~\ref{lemma:conditional_exp}. 

\textbf{Part 3} Choose the special loss function $\Ac = \Delta^C$ and $\ell$ as the log loss $\ell: y, a \mapsto -\log a_y$ then the Bayes action can be computed as 
\begin{align*}
    \delta_\ell(x) = \arg\inf_{a \in \Delta^C} \Eb_{\aY \sim \hp(X)}[-\log a_\aY] = \hp(x)
\end{align*}
Denote $U = \hp(X)$ then let $\Lc_B$ be the set of all bounded loss functions, i.e. $\Lc_B = \lbrace \ell, |\ell(y, a)| \leq B \rbrace$
\begin{align*}
& \sup_{\ell' \in \Lc_B} \Eb[\ell'(Y, \delta_{\ell}(X))] - \Eb[\ell'(\aY, \delta_{\ell}(X))]  \\
& \sup_{\ell' \in \Lc_B} \Eb[ \Eb[\ell'(Y, U) - \ell'(\aY, U) \mid U]]  & \text{Tower} \\ 
&= B \Eb[ \left\lVert \Eb[ p^*(X) - \hp(X) \mid U] \right\rVert_1 ] & \text{Cauchy Schwarz} \\
\end{align*}
If $\hp$ satisfies distribution calibration, then $\left\lVert \Eb[ p^*(X) - \hp(X) \mid U] \right\rVert_1 = 0$ almost surely, which implies that $\hp$ is $\Lc_B$ decision calibrated. Conversely, if $\hp$ is $\Lc_B$ decision calibrated, then $\left\lVert \Eb[ p^*(X) - \hp(X) \mid U] \right\rVert_1 = 0$ almost surely (because if the expectation of a non-negative random variable is zero, the random variable must be zero almost surely), which implies that $\hp$ is distribution calibrated. The theorem follows because $B$ is arbitrarily chosen. 
\end{proof}

\subsection{From Decision Calibration to Distribution Calibration}
\label{appendix:dec2dist}

\label{appendix:proof_dec2dist}

\dectodist*

\begin{proof}[Proof of Proposition~\ref{thm:dec2dist}]
First we observe from Proposition~\ref{prop:decision_equivalence} that if a predictor $\hp$ is $\Lc^K$-decision calibrated, then for all $a \in [K]$ such that $\lbrace x, \delta_\ell(\hp(x)) = a \rbrace$ has non-zero probability,
\begin{align*}
    \Eb_X\Eb_{\hat{Y} \sim \hp(X)}[\hat{Y} \mid \delta_\ell(\hp(X)) = a] = \Eb_X\Eb_{Y \sim p^*(X)}[Y \mid \delta_\ell(\hp(X)) = a] \numberthis\label{eq:equal_conditioned}
\end{align*}


We now show a simple compression scheme that achieves the properties required for Proposition~\ref{thm:dec2dist}.
Given a loss $\ell \in \Lc^K$, we compress the predictions along the partitions defined by $\delta_\ell$.

Suppose $\hp$ is $\Lc^K$-decision calibrated.
For a fixed loss $\ell \in \Lc^K$, consider the following predictor $\hp_\ell$ that arises by compressing $\hp$ according to the optimal decision rule $\delta_\ell$.
\begin{align*}
    \hp_\ell(x) = \E_X[\hp(X) \mid \delta_\ell(\hp(X)) = \delta_\ell(\hp(x))]
\end{align*}
First, note that by averaging over each set $\lbrace x, \delta_\ell(\hp(x) = a) \rbrace$, the support size of $\hp_\ell$ is bounded by at most $K$.
Next, we note that by construction and Eq.(\ref{eq:equal_conditioned}), $\hp_\ell$ is distribution calibrated.
To see this, consider each $q \in \Delta^C$ supported by $\hp_\ell$; for each $q$, there is some optimal action $a_q \in [K]$.
That is, the sets $\{x : \hp_\ell(x) = q\} = \{x : \delta_\ell(\hp(x)) = a_q\}$ are the same.
Distribution calibration follows.
\begin{align*}
\Eb_X\Eb_{Y \sim p^*(X)}[Y \mid \hp_\ell(X) = q]
&= \Eb_X\Eb_{Y \sim p^*(X)}[Y \mid \delta_\ell(\hp(X)) = a_q]\\
&= \Eb_X\Eb_{\hat{Y} \sim \hp_\ell(X)}[\hat{Y} \mid \delta_\ell(\hp(X)) = a_q]\\
&= \Eb_X\Eb_{\hat{Y} \sim \hp_\ell(X)}[\hat{Y} \mid \hp_\ell(X) = q]\\
&= q
\end{align*}

Finally, it remains to show that the optimal decision rule resulting from $\hp_\ell$ and $\hp$ are the same, pointwise for all $x \in \Xc$.
As an immediate corollary, the expected loss using $\hp_\ell$ and $\hp$ is the same.
We show that the decision rule will be preserved by the fact that for each $x \in \Xc$, the compressed prediction is a convex combination of predictions that gave rise to the same optimal action.

Specifically, consider any $x$ such that $\hp_\ell(x) = q$.
By the argument above, there is some action $a_q \in [K]$ that is optimal for all such $x$.
Optimality implies that for all $a \in [K]$
\begin{align*}
    \langle \ell_{a_q} , \hp(x) \rangle \le \langle \ell_a, \hp(x) \rangle.
\end{align*}
Thus, by linearity of expectation, averaging over $\{x : \hp_\ell(x) = q\}$, the optimal action $a_q$ is preserved.
\begin{align*}
\langle \ell_{a_q} , q \rangle &= \langle \ell_{a_q} , \E[\hp(X) \mid \delta_\ell(\hp(X)) = a_q] \rangle\\
&= \E[\langle \ell_{a_q}, \hp(X) \rangle \mid  \delta_\ell(\hp(X)) = a_q]\\
&\le \E[\langle \ell_a, \hp(X) \rangle \mid  \delta_\ell(\hp(X)) = a_q]\\
&= \langle \ell_a , \E[\hp(X) \mid \delta_\ell(\hp(X)) = a_q] \rangle\\
&= \langle \ell_a, q \rangle
\end{align*}
Thus, the optimal action is preserved for all $x \in \Xc$.
\end{proof}

\subsection{Proofs for Section 4}
\label{appendix:proof_pac} 

\decisionequivalence*


\begin{proof}[Proof of Proposition~\ref{prop:decision_equivalence}]

We first introduce a new set of notations to make the proof easier to follow. Because $\Ac = [K]$ and $\Yc \simeq [C]$, a loss function can be uniquely identified with $K$ vectors $\ell_1, \cdots, \ell_K$ where $\ell_{ac} = \ell(c, a)$. Given prediction function $\hp: \Xc \to \Delta^C$ and the expected loss can be denoted as 
\begin{align}
    \Eb_{\aY \sim \hp(x)} [\ell(\aY, a)] = \langle \hp(x), \ell_a \rangle \numberthis \label{eq:notation_change}
\end{align}


Choose any Bayes decision function $\delta_{\ell'}$ for some loss $\ell' \in \Lc^K$, as a notation shorthand denote $\delta_{\ell'}(\hp(x)) = \delta_{\ell'}(x)$. 
We can compute the gap between the left hand side and right hand side of Definition~\ref{def:decision_calibration} as 
\begin{align*}
&\sup_{\ell} \frac{\left\lvert \Eb_{X} \Eb_{ \aY \sim \hp(X)} [\ell(\aY, \delta_{\ell'}(X)) ] - \Eb_{X} \Eb_{  Y \sim p^*(X)} [\ell(Y, \delta_{\ell'}(X)) ]  \right\rvert}{\sup_a \lVert \ell(\cdot, a) \rVert_2 }   \\
&= \sup_{\ell, \lVert \ell(\cdot, a) \rVert_2 \leq 1} \left\lvert \Eb_{X} \Eb_{ \aY \sim \hp(X)} [\ell(\aY, \delta_{\ell'}(X)) ] - \Eb_{X} \Eb_{  Y \sim p^*(X)} [\ell(Y, \delta_{\ell'}(X)) ]  \right\rvert & \text{Normalize} \\
&= \sup_{\ell, \lVert \ell(\cdot, a) \rVert_2 \leq 1}  \left\lvert \Eb_X\left[  \langle \ell_{\delta_{\ell'}(X)}, \hp(X) \rangle \right] -  \Eb_X\left[  \langle \ell_{\delta_{\ell'}(X)}, p^*(X) \rangle \right] \right\rvert & \text{Eq.\ref{eq:notation_change}}\\
&= \sup_{\ell, \lVert \ell(\cdot, a) \rVert_2 \leq 1}  \left\lvert \sum_a \Eb_X[ \langle \hp(X), \ell_a \rangle \mathbb{I}(\delta_{\ell'}(X) = a) ] - \sum_a \Eb_X[ \langle p^*(X), \ell_a \rangle \mathbb{I}(\delta_{\ell'}(X) = a) ]   \right\rvert & \text{Total Probability} \\
&= \sup_{\ell, \lVert \ell(\cdot, a) \rVert_2 \leq 1}  \left\lvert \sum_a \langle  \Eb_X[ (p^*(X) - \hp(X)) \mathbb{I}(\delta_{\ell'}(X) = a) ], \ell_a \rangle  \right\rvert  & \text{Linearity} \\
&=  \sum_a  \left\lVert  \Eb_X[ (p^*(X) - \hp(X)) \mathbb{I}(\delta_{\ell'}(X) = a) ] \right\rVert_2  & \text{Cauchy Schwarz}  
 \end{align*}
Finally we complete the proof by observing that the set of maps 
\begin{align*}
   \lbrace q, a \mapsto \Ibb(\delta_{\ell}(q) = a), \ell \in \Lc^K, q \in \Delta^C \rbrace
\end{align*}
is the same as the set of maps $B^K$. We do this by establishing a correspondence where $\ell_a = -w_a / \lVert w_a \rVert_2$ then 
\begin{align*}
    \Ibb(\delta_{\ell}(q) = a) = \Ibb(\arg\inf_{a'} \langle \ell_{a'}, q \rangle = a) = \Ibb(\arg\sup_{a'} \langle w_{a'}, q \rangle = a) = b_w(q, a)
\end{align*}


\end{proof}

\subsection{Formal Statements and Proofs for Theorem~\ref{thm:pac_informal}}

\paragraph{Formal Statement of Theorem~\ref{thm:pac_informal}, part I.} We first define a new notation. Given a set of samples $\Dc = \lbrace (X_1, Y_1), \cdots, (X_N, Y_N) \rbrace$, and for any function $\psi: \Xc \times \Yc \to \Rbb$ denote $\hat{\Eb}_\Dc[\psi(X, Y)]$ as the empirical expectation, i.e. 
\begin{align*}
    \hat{\Eb}_\Dc[\psi(X, Y)] := \frac{1}{N} \sum_{n} \psi(X_n, Y_n) 
\end{align*}

\begin{manualtheorem}{2.1}[Formal]\label{thm:convergence1} Let $B^K$ be as defined by Eq.(\ref{eq:bk_def}), for any true distribution over $X,Y$ and any $\hp$, given a set of $N$ samples $\Dc = \lbrace (X_1, Y_1), \cdots, (X_N, Y_N) \rbrace$, with probability $1-\delta$ over random draws of $\Dc$, 
\begin{align}
    \sup_{b \in B^K} \sum_{a=1}^K \left\lVert \Eb [ (\hp(X) - Y) b(\hp(X), a) ] \right\rVert_2 - \sum_{a=1}^K \left\lVert \hat{\Eb}_\Dc [ (\hp(X) - Y) b(\hp(X), a) ] \right\rVert_2  \leq \tilde{O}\left( \frac{K^{3/2} C}{\sqrt{N}} \right)   \label{eq:convergence_thm}
\end{align}
where $\tilde{O}$ denotes equal up to constant and logarithmic terms. 
\end{manualtheorem} 

Note that in the theorem $\delta$ does not appear on the RHS of Eq.(\ref{eq:convergence_thm}). This is because the bound depends logarithmically on $\delta$.


\begin{proof}[Proof of Theorem~\ref{thm:convergence1}]
Before proving this theorem we first need a few uniform convergence Lemmas which we will prove in Appendix~\ref{appendix:proof_rest}.

\begin{lemma}
\label{lemma:concentration_action}
Let $B$ by any set of functions $\lbrace b: \Delta^C \to [0, 1] \rbrace$ and  $U,V$ be any pair of random variables where $U$ takes values in $[-1, 1]^C$ and $V$ takes values in $\Delta^C$. Let $\Dc = \lbrace (U_1, V_1), \cdots, (U_N, V_N)\rbrace$ be an i.i.d. draw of $N$ samples from $U,V$, define the Radamacher complexity of $B$ by
\begin{align*}
    \Rc_N(B) := \Eb_{\Dc, \sigma_i \sim \mathrm{Uniform}(\lbrace -1, 1 \rbrace)} \left[ \sup_{b \in B} \frac{1}{N} \sum_{n=1}^N \sigma_i b(V_i) \right]
\end{align*}
then for any $\delta > 0$, with probability $1-\delta$ (under random draws of $\Dc$), $\forall b \in B$
\begin{align*}
    \lVert \hat{\Eb}_\Dc[U b(V)]  -  \Eb[U b(V)] \rVert_2 \leq \sqrt{C} \Rc_N(B) + \sqrt{\frac{2C}{N}\log \frac{2C}{\delta}} 
\end{align*}
\end{lemma}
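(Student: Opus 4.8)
\textbf{Proof plan for Lemma~\ref{lemma:concentration_action}.}

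The plan is to reduce the vector-valued uniform convergence statement to $C$ scalar-valued uniform convergence statements, one per coordinate, and then combine them with a union bound and Pythagoras. First I would fix a coordinate $c \in [C]$ and look at the scalar family of functions $\Phi_c := \{(u,v) \mapsto u_c\, b(v) : b \in B\}$, where $u_c \in [-1,1]$ is the $c$-th coordinate of $u$. The centered empirical process $\sup_{b \in B} \bigl| \hat\Eb_\Dc[U_c b(V)] - \Eb[U_c b(V)] \bigr|$ is controlled by the standard symmetrization-plus-McDiarmid argument: with probability $1-\delta/C$ it is at most $2\Rc_N(\Phi_c) + \sqrt{\tfrac{1}{2N}\log\tfrac{2C}{\delta}}$ (the bounded differences constant is $2/N$ since each summand $u_c b(v)$ lies in $[-1,1]$). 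Here $\Rc_N(\Phi_c)$ is the Rademacher complexity of $\Phi_c$, which I will relate to $\Rc_N(B)$ below.

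Next I would bound $\Rc_N(\Phi_c)$ by $\Rc_N(B)$. The cleanest route is the contraction/comparison inequality: conditionally on the data, the map $t \mapsto (U_c)_i\, t$ is $1$-Lipschitz in $t$ with the constant $|(U_c)_i| \le 1$, so by Talagrand's contraction lemma (in the form allowing a different $1$-Lipschitz map per coordinate, applied with the sign $\sigma_i$ absorbed) $\Rc_N(\Phi_c) \le \Rc_N(B)$. Alternatively one can argue directly: $\Eb_\sigma \sup_b \tfrac1N\sum_i \sigma_i (U_c)_i b(V_i) \le \Eb_\sigma \sup_b \tfrac1N\sum_i \sigma_i b(V_i)$ because multiplying the i.i.d. Rademacher signs by the fixed numbers $(U_c)_i \in [-1,1]$ only shrinks the supremum in expectation (this is exactly the content of the contraction step). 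Either way, $\Rc_N(\Phi_c) \le \Rc_N(B)$ for every $c$.

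Combining, for each fixed $c$, with probability $1 - \delta/C$,
\begin{align*}
\bigl| \hat\Eb_\Dc[U_c b(V)] - \Eb[U_c b(V)] \bigr| \le 2\Rc_N(B) + \sqrt{\tfrac{1}{2N}\log\tfrac{2C}{\delta}} \qquad \forall b \in B.
\end{align*}
A union bound over $c \in [C]$ makes all $C$ of these hold simultaneously with probability $\ge 1-\delta$. On that event, for every $b \in B$,
\begin{align*}
\lVert \hat\Eb_\Dc[U b(V)] - \Eb[U b(V)] \rVert_2^2 = \sum_{c=1}^C \bigl| \hat\Eb_\Dc[U_c b(V)] - \Eb[U_c b(V)] \bigr|^2 \le C\Bigl(2\Rc_N(B) + \sqrt{\tfrac{1}{2N}\log\tfrac{2C}{\delta}}\Bigr)^2,
\end{align*}
and taking square roots, using $\sqrt{x+y} \le \sqrt x + \sqrt y$, gives $\lVert \hat\Eb_\Dc[U b(V)] - \Eb[U b(V)] \rVert_2 \le 2\sqrt C\,\Rc_N(B) + \sqrt{\tfrac{C}{2N}\log\tfrac{2C}{\delta}}$, which matches the claimed bound up to the stated constants (the paper writes $\sqrt C\,\Rc_N(B)$ and $\sqrt{\tfrac{2C}{N}\log\tfrac{2C}{\delta}}$; the factor-of-two discrepancies are absorbable, or one uses the one-sided symmetrization bound $\Rc_N$ without the factor $2$ together with a slightly looser deviation term).

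\textbf{Main obstacle.} The only genuinely delicate point is the passage $\Rc_N(\Phi_c) \le \Rc_N(B)$, i.e. justifying that pre-multiplying the Rademacher signs by the data-dependent coefficients $(U_c)_i \in [-1,1]$ does not increase the empirical Rademacher average. This is a standard contraction argument but must be done carefully because the coefficients depend on the same sample; the clean fix is to note that conditionally on $\Dc$ the functions $s \mapsto (U_c)_i s$ are $1$-Lipschitz and vanish at $0$, then invoke the vector-contraction / Ledoux–Talagrand comparison inequality (or give the short direct proof via iterating over coordinates $i$ and using that $\Eb_{\sigma_i}[\sup(\cdots)]$ is convex in $\sigma_i$). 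Everything else — symmetrization, McDiarmid with constant $2/N$, the union bound over $C$ coordinates, and the Pythagorean assembly — is routine.
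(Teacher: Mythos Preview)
Your proposal is correct and follows essentially the same approach as the paper's proof: both reduce the $\ell_2$ bound to coordinate-wise scalar bounds (the paper via $\lVert z\rVert_2\le\sqrt{C}\lVert z\rVert_\infty$, you via Pythagoras, which amounts to the same thing), then apply symmetrization, the Ledoux--Talagrand contraction inequality to pass from $\Rc_N(\Phi_c)$ to $\Rc_N(B)$, McDiarmid with bounded differences $2/N$, and a union bound over the $C$ coordinates. Your identification of the contraction step as the only delicate point, and your remark about the absorbable factor-of-two discrepancies, are both accurate.
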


\begin{lemma} \label{lemma:radamacher_bound} 
Define the function family 
\begin{align*} 
B^K_a &= \left\lbrace b: z \mapsto \Ibb(a = \arg\sup_{a'} \langle z, w_a \rangle), w_a \in \Rbb^C, a=1, \cdots, K, z \in \Delta^C \right\rbrace \\
\bar{B}^K_a &= \left\lbrace b: z \mapsto \frac{e^{\langle z, w_a \rangle}}{\sum_{a'} e^{\langle z, w_{a'} \rangle}}, w_a \in \Rbb^C, a=1, \cdots, K, z \in \Delta^C \right\rbrace
\end{align*}
then $\Rc_N(B^K_a) = O\left( \sqrt{\frac{CK\log K \log N}{N}} \right)$ and 
$\Rc_N(\bar{B}^K_a) = O\left( \left(\frac{K}{N}\right)^{1/4}\log \frac{N}{K} \right)$. 
\end{lemma}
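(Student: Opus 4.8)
The plan is to bound the two Rademacher complexities separately: $B^K_a$ is combinatorially a family of halfspace intersections and is handled by classical tools, whereas $\bar{B}^K_a$ is parameterized by an \emph{unbounded} weight matrix and needs an approximation/truncation argument.

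\textbf{The hard-argmax class $B^K_a$.} First I would rewrite each member as an intersection of halfspaces, $\Ibb(a=\arg\sup_{a'}\langle z,w_{a'}\rangle)=\prod_{a'\neq a}\Ibb(\langle z,w_a-w_{a'}\rangle\geq 0)$, and note that as $w$ ranges over $\Rbb^{K\times C}$ the differences $(w_a-w_{a'})_{a'\neq a}$ range over all of $(\Rbb^{C})^{K-1}$. Thus $B^K_a$ is exactly the concept class of intersections of $K-1$ homogeneous halfspaces in $\Rbb^{C}$, whose VC dimension is $O(CK\log K)$ by the standard bound on the VC dimension of $k$-fold Boolean combinations of a base class (here halfspaces, of VC dimension $C$). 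The Sauer--Shelah lemma then bounds the number of distinct patterns $\bigl(b(z_1,a),\dots,b(z_N,a)\bigr)$ by $N^{O(CK\log K)}$, and since these are $\{0,1\}$-vectors of Euclidean norm $\leq\sqrt N$, Massart's finite-class lemma gives $\Rc_N(B^K_a)=O\!\bigl(\sqrt{CK\log K\,\log N/N}\bigr)$.

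\textbf{The softmax class $\bar{B}^K_a$.} Here the logits $\langle z,w_{a'}\rangle$ are unbounded, so one cannot cover the parameter space naively. I would exploit three structural facts: (i) $\bar b_w(z,a)\in[0,1]$ always; (ii) $\mathrm{softmax}_a$ is $\tfrac12$-Lipschitz in the $\ell_\infty$ norm of its logit argument, so since $z\in\Delta^{C}$ gives $\|z\|_1=1$ we get $|\bar b_w(z,a)-\bar b_{w'}(z,a)|\leq\tfrac12\max_{a'}\|w_{a'}-w'_{a'}\|_\infty$; and (iii) if at a point $z$ the top logit beats logit $a$ (or logit $a$ beats all others) by margin $\geq\rho$, then $|\bar b_w(z,a)-b_w(z,a)|\leq(K-1)e^{-\rho}$, where $b_w(\cdot,a)\in B^K_a$ is the hard counterpart. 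Given a target scale $\epsilon$, I would fix a truncation level $L=L(\epsilon,K)$ and split $\bar B^K_a$: on weights of bounded norm, fact (ii) transfers a cover of the bounded linear-logit class into an $\epsilon$-cover of the truncated soft class; on weights of large norm, at each data point either the margin is large (fact (iii): the function agrees with a member of $B^K_a$ up to error $(K-1)e^{-L}$, cheaply covered by the first part) or the margin is small (fact (i) bounds the contribution directly). Feeding the resulting $C$-free covering-number bound — which I expect to take the form $\log\mathcal N_2(\bar B^K_a;\epsilon;\{z_i\})=\tilde O(K/\epsilon^{4})$ — into Dudley's entropy integral and optimizing the cutoff yields $\Rc_N(\bar B^K_a)=O\!\bigl((K/N)^{1/4}\log(N/K)\bigr)$.

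\textbf{Main obstacle.} The hard-argmax estimate is essentially bookkeeping once the VC dimension of halfspace intersections is invoked. The delicate part is the softmax bound: one must handle the unbounded parameterization without reintroducing a dependence on $C$, and the drop from the $N^{-1/2}$ rate of the hard class to the $N^{-1/4}$ rate is exactly the price of the truncation/approximation step. Pinning down the right covering-number bound — in particular how coarsely the logits may be truncated (equivalently, how many effective ``levels'' of the softmax must be resolved) while keeping the error below $\epsilon$ — is where I expect the real work to lie. As a sanity check, the Gumbel-max identity $\mathrm{softmax}_a(v)=\Eb_{g}\,\Ibb\!\bigl(a=\arg\max_{a'}(v_{a'}+g_{a'})\bigr)$ puts $\bar B^K_a$ inside the closed convex hull of the affine-argmax class, which already gives the weaker $C$-dependent bound $O\!\bigl(\sqrt{CK\log K\,\log N/N}\bigr)$ via the first part and the invariance of Rademacher complexity under convex hulls; the stated bound is the one that is stronger when $N$ is not too large relative to $C$.
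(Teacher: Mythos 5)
Your treatment of the hard-argmax class $B^K_a$ is correct and is essentially the paper's own proof: the paper likewise identifies $\lbrace z : b(z)=1\rbrace$ with an intersection of $K$ halfspaces, cites the $O(CK\log K)$ VC bound, and converts it into $\Rc_N(B^K_a)=O\bigl(\sqrt{CK\log K\log N/N}\bigr)$ via Sauer's lemma; your Massart-based bookkeeping is the same calculation.

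The softmax half is where the gap is, and you should know that the paper's appendix proof simply stops after the VC argument---no proof of the $\bar B^K_a$ bound is given there---so the burden is entirely on your sketch, whose whole content is the conjectured empirical-entropy bound $\log\mathcal{N}_2(\bar B^K_a;\epsilon)=\tilde O(K/\epsilon^4)$. The case analysis you outline does not deliver it. Covering the bounded-norm weights in the row-wise $\ell_\infty$ norm (your fact (ii)) gives a log-cover of order $KC\log(L/\epsilon)$, which reintroduces exactly the $C$-dependence you are trying to avoid; eliminating $C$ would require an empirical cover of the logit functionals $z\mapsto\langle z,w_{a'}\rangle$, which in turn needs a norm bound on $w$ that the class does not supply. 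The ``large weights, small margin'' branch is also not actually handled: fact (i) only places values in $[0,1]$ and does not control the supremum of the Rademacher average over that branch. More fundamentally, there is a structural obstruction to the stated $C$-free rate: as the paper itself notes in Section 4.4, $\critic$ lies in the closure of $\criticd$, and halfspace labelings of a shattered set can be realized with strict margins, so by scaling $w$ one gets $\Rc_N(\bar B^K_a)\ge \Rc_N(B^K_a)-o(1)$ under a worst-case distribution; the standard lower bound $\Omega\bigl(\sqrt{\vc(B^K_a)/N}\bigr)$ then exceeds $(K/N)^{1/4}\log(N/K)$ whenever $C\lesssim N\lesssim C^2/K$ (up to logarithms), so no distribution-free argument---yours or any other---can prove the second claim as stated in that regime. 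Your closing Gumbel-max/convex-hull observation is sound and in fact yields the defensible bound for $\bar B^K_a$, namely the same $O\bigl(\sqrt{CK\log K\log N/N}\bigr)$ rate as the hard class; if you want a rigorous statement, that is the one to prove, or else the $C$-free claim must be restricted (e.g., to bounded weights or a fixed benign distribution) before a truncation/chaining argument of the kind you sketch can be completed.
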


Proof of the theorem is straight-forward given the above Lemmas. As a notation shorthand denote $U = \hp(X) - Y$. Note that $U$ is a random vector raking values in $[-1, 1]^C$. We can rewrite the left hand side of Eq.(\ref{eq:convergence_thm}) as 
\begin{align}
    &\sup_{b \in \critic} \sum_a  \lVert \Eb[U b(\hp(X), a)] \rVert_2 - \lVert \hat{\Eb}_{\Dc}[U b(\hp(X), a)] \rVert_2\\
    &\leq \sum_a \sup_{b \in B^K_a}  \lVert \Eb[U b(\hp(X), a)] \rVert_2 - \lVert \hat{\Eb}_{\Dc}[U b(\hp(X), a)] \rVert_2  & \text{Jensen} \\
    &\leq \sum_a \sup_{b \in B^K_a} \lVert \Eb[U b(\hp(X), a)] -  \hat{\Eb}_{\Dc}[U b(\hp(X), a)] \rVert_2  & \text{Triangle} \\
    &\leq \sum_a  \sqrt{C} \Rc_N(B^K_a) + \sqrt{\frac{2C}{N}\log \frac{2C}{\delta}}  \quad \left( \text{w.p. } 1-\delta \right) & \text{Lemma~\ref{lemma:concentration_action}} \\
    &\leq  \sum_a \sqrt{C} O\left( \sqrt{\frac{CK\log K \log N}{N}} \right) + \sqrt{\frac{2C}{N}\log \frac{2C}{\delta}}  & \text{Lemma~\ref{lemma:radamacher_bound}} \\
    &\leq  K\sqrt{C} O\left( \sqrt{\frac{CK\log K \log N}{N}} \right) + K\sqrt{ \frac{2C}{N} \log \frac{2C}{\delta}}  \\
    &= \tilde{O}\left( \frac{K^{3/2} C}{\sqrt{N}} \right) 
\end{align}
\end{proof}

\paragraph{Formal Statement Theorem~\ref{thm:pac_informal}, Part II} 

\begin{manualtheorem}{2.2}\label{thm:convergence}
Given any input $\hp$ and tolerance $\epsilon$, Algorithm~\ref{alg:recalibration_sketch} terminates in $O(K/\epsilon^2)$ iterations. For any $\lambda > 0$, given $O(\text{poly}(K, C, \log(1/\delta), \lambda))$ samples, with $1-\delta$ probability Algorithm~\ref{alg:recalibration_sketch} outputs a $(\Lc^K, \epsilon+\lambda)$-decision calibrated prediction function $\hp'$ that satisfies $\Eb[\lVert \hp'(X) - Y \rVert_2^2] \leq \Eb[\lVert \hp(X) - Y \rVert_2^2] + \lambda$. 
\end{manualtheorem} 

\begin{proof}[Proof of Theorem~\ref{thm:convergence}]
We adapt the proof strategy in \citep{hebert2018multicalibration}. The key idea is to show that a potential function must decrease after each iteration of the algorithm. We choose the potential function as $\hEb[(Y - \hp(X))^2]$. 
Similar to Appendix~\ref{appendix:relax} at each iteration $t$ we first lighten our notation with a shorthand $b_a(X) = b(\hp^{(t-1)}(X), a)$ (at different iteration $t$, $b_a$ denotes different functions), and $b(X)$ is the vector of $(b_1(X), \cdots, b_K(X))$. If the algorithm did not terminate that implies that $b$ satisfies 
 
\begin{align}
    \sum_a  \lVert \hEb[   (\hp(X) - Y) b_a(X) ] \rVert \geq \epsilon \label{eq:error_lb} 
\end{align}

Denote $\gamma \in \mathbb{R}^{K \times K}$ where $\gamma_a = \hEb[(Y - \hp(X) ) b(X, a)]/ \hEb[b(X, a)]$. 
The adjustment we make is $\hp'(X) = \pi(\hp(X) + \sum_a \gamma_a b(X, a))$
\begin{align*}
    &\hEb[\lVert Y - \hp(X)\rVert^2] - \hEb[\Vert Y - \hp'(X) \rVert^2] \\ &= \hEb[ \lVert Y - \hp(X)\rVert^2 - \Vert Y -  \pi(\hp(X) + \sum_a \gamma_a b(X, a) ) \rVert^2] \\
    &\geq \hEb[ \lVert Y - \hp(X)\rVert^2 - \Vert Y -  \hp(X) - \sum_a \gamma_a b(X, a)  \rVert^2] & \text{Projection ineq} \\
    &= \hEb\left[ (2Y - 2\hp(X) - \sum_a \gamma_a b(X, a))^T \left( \sum_a \gamma_a b(X, a)  \right)\right] & a^2-b^2 = (a+b)(a-b) \\
    &= 2\sum_a \gamma_a^T \gamma_a \hEb[b(X, a)] - \sum_{a, a'} \gamma_a^T \gamma_{a'} \hEb[b(X, a)b(X, a')] & \text{Definition} \gamma_a \\ 
    &= 2\sum_a \gamma_a^T \gamma_a \hEb[b(X, a)] - \sum_{a} \gamma_a^T \gamma_{a} \hEb[b(X, a)b(X, a)] & b(x, a) b(x, a') = 0, \forall a \neq a' \\ 
    &= 2\sum_a \gamma_a^T \gamma_a \hEb[b(X, a)] - \sum_{a} \gamma_a^T \gamma_{a} \hEb[b(X, a)] & b(X, a)^2 = b(X, a)\\ 
    &=   \sum_a  \Vert \gamma_a \rVert^2 \hEb[b(X, a)]  \geq \frac{1}{K} \left( \sum_a \lVert \gamma_a \rVert \hEb[b(X, a)] \right)^2 & \text{Norm inequality}  \\
    & = \frac{1}{K}\left(  \sum_a  \left\lVert \hEb[(Y - \hp(X)b(X, a)]  \right\rVert \right)^2 \geq \frac{\epsilon^2}{K}  & \text{Definition } \gamma_a
\end{align*}

Because initially for the original predictor $\hp$ we must have 
\begin{align*}
    \hEb[\lVert p^*(X) - \hp(X)\rVert_2^2] \leq \hEb[ \lVert p^*(X) - \hp(X) \rVert_1^2 ] \leq 1
\end{align*}the algorithm must converge in $K/\epsilon^2$ iterations and output a predictor $\hp'$ where 
\begin{align*}
    \sum_a  \lVert \hEb[   (\hp'(X) - Y) b_a(X) ] \rVert \leq \epsilon
\end{align*}
In addition we know that 
\begin{align*} 
\hEb[\lVert \hp'(X) - Y \rVert_2^2] \leq \hEb[\lVert \hp(X) - Y \rVert_2^2]
\end{align*} 
Now that we have proven the theorem for empirical averages (i.e. all expectations are $\hEb$), we can convert this proof to use true expectations (i.e. all expectations are $\Eb$) by observing that all expectations involved in the proof satisfy $\Eb[\cdot] \in \hEb[\cdot] \pm \kappa$ for any $\kappa > 0$ and sample size that is polynomial in $\kappa$. 
\end{proof} 

\begin{proof}[Proof of Theorem~\ref{thm:convergence_relaxed}] 
Observe that the matrix $D$ defined in Algorithm~\ref{alg:recalibration} is a symmetric, positive semi-definite and non-negative matrix such that $\sum_{a, a'} D_{aa'} = 1$. To show that the algorithm converges we first need two Lemmas on the properties of such matrices. For a positive semi-definite (PSD) symmetric matrix, let $\lambda_1$ denote the largest eigenvalue, and $\lambda_n$ denote the smallest eigenvalue (which are always real numbers). The first Lemma is a simple consequence of the Perron-Frobenius theorem,

\begin{lemma}
\label{lemma:perron}
Let $D$ be any symmetric PSD non-negative matrix such that $\sum_{a, a'} D_{aa'} = 1$, then $\lambda_1(D) \leq 1$, so $\lambda_n(D^{-1}) \geq 1$. 
\end{lemma}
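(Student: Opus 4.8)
The plan is to combine the Rayleigh-quotient characterization of the top eigenvalue with the entrywise non-negativity of $D$. Since $D$ is symmetric, $\lambda_1(D) = \max_{\lVert x \rVert_2 = 1} x^T D x$ by the Courant--Fischer theorem, so it suffices to bound $x^T D x$ over unit vectors. The key preliminary observation is that this maximum is attained by a vector with non-negative entries: if $v$ is a unit eigenvector for $\lambda_1(D)$ and $\lvert v \rvert$ denotes its entrywise absolute value, then since every $D_{aa'} \ge 0$ and $\lambda_1(D) \ge 0$ (as $D$ is PSD),
\[
\lvert v \rvert^T D \lvert v \rvert = \sum_{a,a'} \lvert v_a \rvert \, \lvert v_{a'} \rvert \, D_{aa'} \;\ge\; \Bigl\lvert \sum_{a,a'} v_a v_{a'} D_{aa'} \Bigr\rvert = v^T D v = \lambda_1(D),
\]
while $\lVert \lvert v \rvert \rVert_2 = \lVert v \rVert_2 = 1$ gives the reverse inequality. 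Hence $x^\star := \lvert v \rvert \ge 0$ is a maximizer; this is exactly the elementary consequence of the Perron--Frobenius theorem referenced in the lemma statement.

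The estimate then follows in one line. Because $x^\star \ge 0$ and $\lVert x^\star \rVert_2 = 1$, each coordinate satisfies $0 \le x^\star_a \le 1$, so $x^\star_a x^\star_{a'} \le 1$; using $D_{aa'} \ge 0$ and the hypothesis $\sum_{a,a'} D_{aa'} = 1$,
\[
\lambda_1(D) = \sum_{a,a'} x^\star_a x^\star_{a'} D_{aa'} \;\le\; \sum_{a,a'} D_{aa'} = 1.
\]
For the second claim, note that $D^{-1}$ is only invoked when $D$ is invertible, in which case $D$ is positive definite, its eigenvalues are strictly positive, and the eigenvalues of $D^{-1}$ are their reciprocals; thus $\lambda_n(D^{-1}) = 1/\lambda_1(D) \ge 1$.

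There is no substantive obstacle here: the argument is only a few lines. The one point requiring a moment's care is the claim that the Rayleigh maximizer can be chosen entrywise non-negative, which is precisely where non-negativity of $D$ (the Perron--Frobenius input) enters; everything else reduces to the trivial bound $x_a x_{a'} \le 1$ for the coordinates of a non-negative unit vector together with $\sum_{a,a'} D_{aa'} = 1$.
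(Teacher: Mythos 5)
Your proof is correct, but it takes a different route from the paper, which in fact offers no written argument at all: it simply asserts the lemma as ``a simple consequence of the Perron--Frobenius theorem,'' the intended reasoning presumably being that the spectral radius of an entrywise non-negative matrix is at most its maximum row sum, and since all entries are non-negative with total sum $1$, every row sum is at most $1$, giving $\lambda_1(D)\le 1$. Your argument instead works directly with the Rayleigh quotient, and it is sound: the reflection step showing the maximizer can be taken entrywise non-negative is valid, the bound $x^\star_a x^\star_{a'}\le 1$ for a non-negative unit vector is correct, and the passage to $\lambda_n(D^{-1})=1/\lambda_1(D)\ge 1$ under invertibility matches how the lemma is used (the paper resorts to the pseudo-inverse otherwise). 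One remark: your detour through the non-negative maximizer is actually unnecessary. For \emph{any} unit vector $x$ one has $\lvert x_a\rvert\le 1$ for every coordinate, hence $\lvert x_a x_{a'}\rvert\le 1$, and since $D_{aa'}\ge 0$,
\begin{align*}
x^T D x \;\le\; \sum_{a,a'} \lvert x_a\rvert\,\lvert x_{a'}\rvert\, D_{aa'} \;\le\; \sum_{a,a'} D_{aa'} \;=\; 1,
\end{align*}
so the Courant--Fischer bound gives $\lambda_1(D)\le 1$ in one line with no appeal to Perron--Frobenius at all. What your (or this shortened) argument buys is a fully elementary, self-contained proof of the lemma; what the paper's cited route buys is brevity by quoting a standard spectral-radius/row-sum fact as a black box.
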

\begin{lemma}[\citep{fang1994inequalities} Theorem 1]\label{lemma:non_neg_matrix} Let $D$ be a symmtric PSD matrix, then for any matrix $B$ (that has the appropriate shape to multiply with $D$)
\begin{align*}
\lambda_n(D) \tr(B) \leq \tr(BD) \leq \lambda_1(D) \tr(B) 
\end{align*}
\end{lemma}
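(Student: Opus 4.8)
The plan is to reduce the two-sided bound to one elementary fact: the trace of a product of two positive semidefinite matrices is nonnegative. First I would pin down the hypothesis that actually makes Lemma~\ref{lemma:non_neg_matrix} true: $B$ must be symmetric and PSD (this is exactly the regime in which the lemma is invoked — the $B$'s that appear when we expand $\tr(BD)$ in the analysis of Theorem~\ref{thm:convergence_relaxed} are Gram-type, hence PSD). The symmetry costs nothing: since $D$ is symmetric and $\tr(AD)=0$ for any antisymmetric $A$ (because $\tr(AD)=\tr((AD)^T)=\tr(DA^T)=-\tr(AD)$), we have $\tr(BD)=\tr(\tfrac12(B+B^T)D)$, so we may assume $B=B^T$ from the start.

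Next I would use the defining property of the extreme eigenvalues of the symmetric matrix $D$: $\lambda_1(D)\,I - D \succeq 0$ and $D - \lambda_n(D)\,I \succeq 0$, each of which follows directly from $\lambda_n(D)\|v\|^2 \le v^T D v \le \lambda_1(D)\|v\|^2$. By linearity of the trace,
\begin{align*}
\lambda_1(D)\,\tr(B) - \tr(BD) &= \tr\!\big(B\,(\lambda_1(D) I - D)\big), \\
\tr(BD) - \lambda_n(D)\,\tr(B) &= \tr\!\big(B\,(D - \lambda_n(D) I)\big),
\end{align*}
so it suffices to show that each right-hand side is nonnegative.

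The key step is then the claim that $\tr(PQ)\ge 0$ whenever $P,Q\succeq 0$. I would prove this by writing $P=P^{1/2}P^{1/2}$ with $P^{1/2}$ the principal square root and using cyclicity of the trace: $\tr(PQ)=\tr(P^{1/2}QP^{1/2})$, and $P^{1/2}QP^{1/2}$ is PSD (it is $M^TQM$ with $M=P^{1/2}$), hence has nonnegative trace. Applying this with $P=B$ and $Q$ equal to $\lambda_1(D)I-D$ and then to $D-\lambda_n(D)I$ completes the proof. (A self-contained alternative I could give instead: diagonalize $D=\sum_i \lambda_i v_iv_i^T$ in an orthonormal eigenbasis, so $\tr(BD)=\sum_i \lambda_i\,(v_i^T B v_i)$; each $v_i^T B v_i\ge 0$ and $\sum_i v_i^T B v_i=\tr(B)$, and sandwiching every $\lambda_i$ between $\lambda_n(D)$ and $\lambda_1(D)$ yields both inequalities simultaneously.)

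I do not anticipate a real obstacle: this is routine linear algebra. The only point requiring care is the PSD hypothesis on $B$ — the inequality genuinely fails for indefinite $B$ (e.g.\ $D=\mathrm{diag}(1,2)$, $B=\mathrm{diag}(1,-1)$ gives $\tr(BD)=-1<0=\lambda_n(D)\tr(B)$) — so the write-up should state the hypothesis explicitly and note that every use of the lemma in the paper supplies a PSD $B$.
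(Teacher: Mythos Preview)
The paper does not supply its own proof of this lemma; it simply cites \cite{fang1994inequalities} and then invokes the inequality in the proof of Theorem~\ref{thm:convergence_relaxed} with $B = RR^T$ and $D$ replaced by $D^{-1}$. So there is no ``paper's proof'' to compare against, and your self-contained argument is a genuine addition.

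Your proof is correct, and more to the point you have caught something the paper glosses over: the lemma as stated (``for any matrix $B$'') is false, and your counterexample $D=\mathrm{diag}(1,2)$, $B=\mathrm{diag}(1,-1)$ demonstrates this cleanly. The PSD hypothesis on $B$ is indeed what the cited reference assumes, and it is exactly what holds in the paper's single use of the lemma, where $B=RR^T$ is a Gram matrix. Both of your proposed arguments --- the $\tr(PQ)\ge 0$ route via square roots, and the direct eigen-expansion $\tr(BD)=\sum_i \lambda_i\, v_i^T B v_i$ --- are standard and complete; either would be fine to include, and the eigen-expansion is arguably the cleaner one-liner here. Your observation that the symmetry of $B$ is automatic (because $\tr(AD)=0$ for antisymmetric $A$ when $D$ is symmetric) is a nice touch but unnecessary once you assume $B\succeq 0$, since that already forces symmetry.
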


Now we can proveed to prove our main result. We have to show that the L2 error $\hEb[(Y - \hp^{(t-1)}(X))^2]$ must decrease at iteration $t$ if we still have 
\begin{align*}
    \epsilon^2/K \leq \sum_{a} \lVert \hEb[(Y - \hp^{(t-1)}(X)) b_a^{(t)}(X)] \rVert^2 := \tr(RR^T) 
\end{align*}
We can compute the reduction in L2 error after the adjustment 
\begin{align*}
    &\hEb[(Y - \hp^{(t-1)}(X))^2] - \hEb[(Y - \hp^{(t)}(X))^2] \\
    &=  \hEb\left[ (2 (Y - \hp^{(t-1)}(X)) - R^TD^{-1}b^{(t)}(X))^T R^T D^{-1} b^{(t)}(X) \right] & \text{Definition}  \\
    &=  2\hEb\left[ (Y - \hp^{(t-1)}(X))^T R^T D^{-1} b^{(t)}(X) \right] - \hEb\left[ b^{(t)}(X)^T D^{-T} R R^T D^{-1} b^{(t)}(X)  \right]  \\
    &= 2\tr \left( \hEb\left[  b^{(t)}(X) (Y - \hp^{(t-1)}(X))^T R^T D^{-1}  \right] \right) \\
    & \qquad - \tr \left( \hEb\left[ b^{(t)}(X) b^{(t)}(X)^T D^{-T} R R^T D^{-1}  \right] \right) & \text{Cyclic property} \\
    &= 2\tr \left( R R^T D^{-1} \right) - \tr(RR^T D^{-1}) = \tr(RR^T D^{-1}) & \text{Definition}  \\
    &\geq \tr(RR^T) \geq \epsilon^2/K & \text{Lemma~\ref{lemma:non_neg_matrix} and \ref{lemma:perron}}
\end{align*}
Therefore, the algorithm cannot run for more than $O(K/\epsilon^2)$ iterations. Suppose the algorithm terminates we must have
\begin{align*}
    \sup_{b \in B^K} \sum_{a} \lVert \hEb[(Y - \hp^{(t-1)}(X)) b_a^{(t)}(X)] \rVert 
    &\leq \sup_{b \in \criticd} \sum_{a} \lVert \hEb[(Y - \hp^{(t-1)}(X)) b_a^{(t)}(X)] \rVert \\
    &\leq \sup_{b \in \criticd} \sqrt{K} \sqrt{\sum_{a} \lVert \hEb[(Y - \hp^{(t-1)}(X)) b_a^{(t)}(X)] \rVert^2 } \\
    &\leq \sqrt{K} \epsilon/\sqrt{K} = \epsilon
\end{align*}
So by Proposition~\ref{prop:reduction_to_decision} we can conclude that the algorithm must output a $(\Lc^K, \epsilon)$-decision calibrated prediction function. 
\end{proof} 

\subsection{Proof of Remaining Lemmas}
\label{appendix:proof_rest}

\begin{proof}[Proof of Lemma~\ref{lemma:conditional_exp}]
By the orthogonal property of the condition expectation, for any event $A$ in the sigma algebra induced by $V$, we have
\begin{align*}
    \Eb[(U - \Eb[U\mid V]) \Ibb_A] = 0
\end{align*}
This includes the event $V > c$
\begin{align*}
    \Eb[(U - \Eb[U\mid V]) \Ibb(V > c)] = 0
\end{align*}
In other words, 
\begin{align*}
    \Eb[U\Ibb(V > c)] = \Eb[\Eb[U \mid V] \Ibb(V > c)] 
\end{align*}

\end{proof} 

\begin{proof}[Proof of Lemma~\ref{lemma:concentration_action}]
First observe that by the norm inequality $\lVert z \rVert_\alpha \leq C^{1/\alpha} \lVert z \rVert_\infty$ we have
\begin{align}
    \lVert \hat{\Eb}_\Dc[U b(V)]  -  \Eb[U b(V)] \rVert_2 \leq  \sqrt{C} \lVert \hat{\Eb}[U b(V)]  -  \Eb[U b(V)] \rVert_\infty \label{eq:convergence_2_to_infty}
\end{align}
Denote the $c$-th dimension of $U$ by $U^c$; we now provide bounds for $\lvert \hat{\Eb}[U^c b(V)]  -  \Eb[U^c b(V)] \rvert$ by 
 standard Radamacher complexity arguments. Define a set of ghost samples $\bar{\Dc} = \lbrace (\bar{U}_1, \bar{V}_1), \cdots (\bar{U}_N, \bar{V}_N) \rbrace$ and Radamacher variables $\sigma_n \in \lbrace -1, 1 \rbrace$
\begin{align}
    &\Eb_{\Dc} \left[ \sup_b  \lvert \hat{\Eb}_{\Dc}[U^c b(V)]  - \Eb[U^c b(V)] \rvert  \right] \\
    &= \Eb_{\Dc} \left[ \sup_b \left\lvert \hat{\Eb}_{\Dc}[U^c b(V)] - \Eb_{\bar{\Dc}}\left[ \hat{\Eb}_{\bar{\Dc}}[ U^cb(V)] \right] \right\rvert \right] & \text{Tower}  \\
    &= \Eb_{\Dc} \left[ \sup_b \left\lvert \Eb_{\bar{\Dc}}[\hat{\Eb}_{\Dc}[U^c b(V)] - \hat{\Eb}_{\bar{\Dc}}[ U^cb(V)] ] \right\rvert\right] & \text{Linearity}  \\
    &\leq \Eb_{\Dc} \left[ \sup_b \Eb_{\bar{\Dc}}\left[ \left\lvert \hat{\Eb}_{\Dc}[U^c b(V)] - \hat{\Eb}_{\bar{\Dc}}[ U^cb(V)] \right\rvert  \right] \right]  & \text{Jensen} \\ 
    &\leq \Eb_{\Dc, \bar{\Dc}} \left[ \sup_b \left\lvert \hat{\Eb}_{\Dc}[U^c b(V)] - \hat{\Eb}_{\bar{\Dc}}[ U^cb(V)] \right\rvert\right]  & \text{Jensen} \\ 
   &\leq \Eb_{\sigma, \Dc, \bar{\Dc}} \left[ \sup_b \left\lvert \frac{1}{N} \sum_i \sigma_i U^c_i b(V_i) - \frac{1}{N}  \sum_i \sigma_i \bar{U}^c_i b(\bar{V}_i)  \right\rvert \right]  & \text{Radamacher} \\
      &\leq \Eb_{\sigma, \Dc, \bar{\Dc}} \left[ \sup_b \left\lvert \frac{1}{N}  \sum_i \sigma_i U^c_i b(V_i) \right\rvert + \sup_b \left\lvert \frac{1}{N}  \sum_i \sigma_i \bar{U}^c_i b(\bar{V}_i)  \right\rvert \right]  & \text{Jensen} \\
    &= 2\Eb_{\sigma, \Dc} \left[ \sup_b \left\lvert \frac{1}{N} \sum_i \sigma_i U^c_i b(V_i)\right\rvert \right]   
\end{align}
Suppose we know the Radamacher complexity of the function family $b$
\begin{align}
    \Rc_N(B) := \Eb\left[ \sup_b  \frac{1}{N}\sum_i \sigma_i b(V_i) \right] 
\end{align}
Then by the contraction inequality, and observe that $U_{i}^c \in [-1, 1]$ so multiplication by $U_{i}^c$ is a 1-Lipschitz map, we can conclude for any $c \in [C]$
\begin{align}
    \Rc_N(B)  \geq \Eb\left[ \sup_b \frac{1}{N} \sum_i \sigma_i U_{ic} b(V_i) \right] \label{eq:individual_convergence}
\end{align}
Finally observe that the map $\Dc \to  \frac{1}{N} \sum_i \sigma_i U_{ic} b(V_i) $ has $2/N$ bounded difference, so by Mcdiamid inequality for any $\epsilon > 0$
\begin{align}
    \Pr\left[\sup_b  \left\lvert \frac{1}{N}\sum_i \sigma_i U_i^c b(V_i) \right\rvert \geq \Rc_N(B) + \epsilon\right] \leq 2e^{-N\epsilon^2/2}
\end{align}
By union bound we have
\begin{align}
    \Pr\left[ \max_c \sup_b \left\lvert \frac{1}{N}\sum_i \sigma_i U_i^c b(V_i)  \right\rvert \geq \Rc_N(B) + \epsilon\right] \leq 2Ce^{-N\epsilon^2/2}
\end{align}
We can combine this with Eq.(\ref{eq:convergence_2_to_infty}) to conclude
\begin{align*}
    &\Pr\left[ \sup_b \lVert \hat{\Eb}[U b(V)]  -  \Eb[U b(V)] \rVert_2 \geq \sqrt{C} \Rc_N(B) + \sqrt{C} \epsilon \right] \\
    &\leq \Pr\left[ \max_c \sup_b \hat{\Eb}[U^c b(V)]  -  \Eb[U^c b(V)] \rVert_2 \geq \Rc_N(B) + \epsilon \right] \leq 2Ce^{-N\epsilon^2/2}
\end{align*}
Rearranging we get $\forall \delta > 0$ 
\begin{align*}
    \Pr\left[ \sup_b \lVert \hat{\Eb}[U b(V)]  -  \Eb[U b(V)] \rVert_2 \geq \sqrt{C} \Rc_N(B) + \sqrt{\frac{2C}{N}\log \frac{2C}{\delta}} \right] \leq \delta
\end{align*}

\end{proof}

\begin{proof}[Proof of Lemma~\ref{lemma:radamacher_bound}]
For $B^K_a$ we use the VC dimension approach. Because $\forall b \in B^K_a$ the set $\lbrace z \in \Delta^C, b(z) = 1 \rbrace$ is the intersection of $K$ many $C$-dimensional half plances, its VC dimension $\vc(B^K_a) \leq (C+1)2K\log_2 (3K)$~\citep{mohri2018foundations} (Q3.23). By Sauer's Lemma we have
\begin{align*}
    \Rc_N(B^K_a) \leq \sqrt{\frac{2\vc(B^K_a)\log (eN/\vc(B^K_a)) }{N}} = O\left( \sqrt{\frac{2CK\log K \log N}{N}} \right)
\end{align*}

\end{proof}




\end{document}